\documentclass{article}
\usepackage{lmodern}
\usepackage{arxiv}
\usepackage[utf8]{inputenc}
\usepackage[T1]{fontenc}
\usepackage{microtype}

\usepackage{amsmath,amsfonts,bm}

\def\eqref#1{equation~\ref{#1}}

\def\1{\bm{1}}

\DeclareMathAlphabet{\mathsfit}{\encodingdefault}{\sfdefault}{m}{sl}
\SetMathAlphabet{\mathsfit}{bold}{\encodingdefault}{\sfdefault}{bx}{n}

\def\gP{{\mathcal{P}}}
\def\gQ{{\mathcal{Q}}}

\usepackage{natbib}
\setcitestyle{authoryear,round,citesep={;},aysep={,},yysep={;}}
\usepackage{url}
\usepackage{array}
\usepackage{tabularx}
\usepackage{graphicx}
\usepackage{placeins}
\usepackage{tikz}
\usetikzlibrary{arrows.meta}
\usepackage{soul}
\usepackage{algorithm}
\usepackage{algorithmic}
\usepackage{listings}
\usepackage[hidelinks,pdfusetitle]{hyperref}

\newtheorem{proposition}{Proposition}
\newtheorem{lemma}{Lemma}

\title{LCT: Losslessly Compressed Tensors with GPU friendly execution for Training}
\renewcommand{\undertitle}{}
\renewcommand{\headeright}{}
\renewcommand{\shorttitle}{LCT: Losslessly Compressed Tensors}
\author{%
  Max Zhu\textsuperscript{1}\thanks{Corresponding author: Max Zhu (\href{mailto:mz406@cam.ac.uk}{\texttt{mz406@cam.ac.uk}}).}\qquad
  Bohang Sun\textsuperscript{1}\qquad
  Pietro Liò\textsuperscript{1}\qquad
  Jialin Yu\textsuperscript{2}\\[0.6em]
  {\normalfont\normalsize
    \textsuperscript{1}\,University of Cambridge\qquad
    \textsuperscript{2}\,University of Oxford}%
}
\hypersetup{pdfauthor={Max Zhu, Bohang Sun, Pietro Liò, Jialin Yu}}
\date{}

\begin{document}
\begingroup
  \renewcommand{\thefootnote}{}
  \footnotetext[0]{Preprint, under review.\\
    \hspace*{1.8em}Code and package: \url{https://github.com/Maxzhu123/Losslessly_Compressed_Tensors}.}
\endgroup
\setcounter{footnote}{1}
\maketitle
\setcounter{footnote}{0}
\begin{abstract}

Lossless tensor compression could reduce memory usage during training, but training requires tensors to be repeatedly re-encoded.
We introduce \textbf{L}osslessly \textbf{C}ompressed \textbf{T}ensors (LCT), a GPU execution framework designed for this repeated encode–decode regime. LCT exploits the natural redundancy in floating point data to reduce storage for weights, activations and optimizer states. Highly optimized GPU kernels implement Huffman coding with precomputed codebooks for low compression and decompression overhead, essential for training. Our analysis characterizes LCT's storage costs and adaptation to magnitude shifts. LCT achieves a compression ratio of up to $1.455\times$, and is demonstrated in training and finetuning settings. On a 162M-parameter NanoGPT, LCT reduces memory usage by $24.6\%$ for a $7.2\%$ step-time increase. In Qwen3-4B fine-tuning, LCT saves up to $23.8\%$ of peak memory, enabling a 3.5X larger context length in training on the same GPU.
\end{abstract}

\suppressfloats[t] 
\begin{figure}[t]
 \centering
 \includegraphics[width=\linewidth]{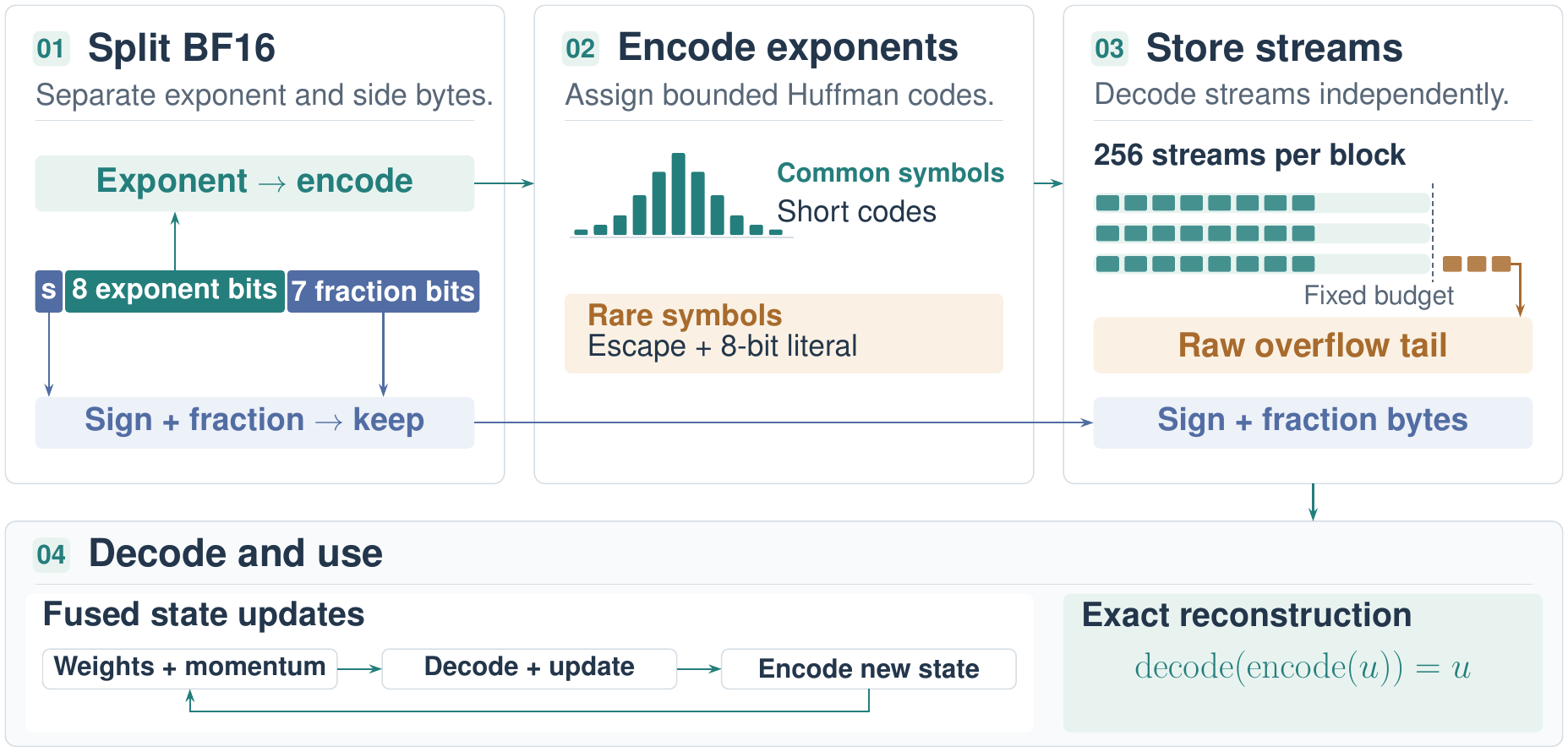}
 \caption{Overview of LCT. Sign-mantissa bytes remain unchanged while exponents are encoded with bounded Huffman codes with escapes for rare values. Values are packed into fixed sized streams with overflow buffers. Tensors are decoded losslessly, with fused kernels for optimizer updates.}
 \label{fig:graphical-abstract}
\end{figure}
\section{Introduction}
\label{sec:introduction}

Memory is a limiting factor in training large language models (LLMs), larger models and longer sequence lengths have increased memory requirements. Weights, optimizer buffers and transient activation caches all consume memory. Although a slower device can be compensated for with longer training, a memory limitation is much more difficult to overcome. Offloading storage to the CPU \citep{cpu_offload, cpu_offload2}, or even NVMe \citep{nvme_offload} allows for larger models, but can easily suffer from IO bottlenecks.  

The dominant methods to tackle this trade off accuracy for storage. Low precision training datatypes such as FP8 \citep{fp8} and FP4 \citep{fp4} use fewer bits per value. However, intermediate activations and caches still use higher precision to ensure numerical stability and hardware support is limited to the most modern hardware. Activation quantization has been explored down to 2 bits per value in ActNN \citep{actnn}, while other methods exploited the low rank nature of activations \citep{compact}. Optimizer states have been quantized to 8 bits \citep{8bit_optim}. 3-4 bit weight quantization methods such as GPTQ \citep{gptq} and AWQ \citep{awq} are widely adopted for inference. These methods are designed to minimise accuracy loss, but are not entirely lossless and require careful tuning to avoid accuracy degradation.

Lossless compression is an alternative. Gist \citep{gist} applies lossless compression to ReLU activations, but is not universally applicable. General methods have been developed to take advantage of compression for inference \citep{dfloat11, zipnn, splitzip, zipserv}. However, these methods have not been applied for training. Unlike inference, weights and optimizer states are updated every step, and activations must persist between the forward and backward pass. Table~\ref{tab:training-lifecycle} summarizes these lifetimes and the corresponding operations for an SGD-momentum (SGDM) step. Existing methods for lossless compression offer fast decoding speeds, but are too slow to encode (Table~\ref{tab:codec-bandwidth}). Training turns lossless compression into a repeated encode–decode problem rather than a decode-centric one. Figure~\ref{fig:compression-use-cases} illustrates this distinction.

We introduce \textbf{L}osslessly \textbf{C}ompressed \textbf{T}ensors (LCT) (Figure~\ref{fig:graphical-abstract}). Designed for high performance on GPU hardware, LCT losslessly compresses the exponents of Bfloat16 (BF16) values to reduce memory usage during training. Fused pointwise operators are introduced for optimizers with compressed buffers, while a shared GPU buffer removes host synchronization during training. PyTorch autograd integration compresses cached activations. The tensors remain compressed until they are needed, at which point they are decompressed on-demand. 

Our contributions are:
\begin{itemize}
 \item A lossless BF16 codec and training integration, including GPU Triton kernels, fused pointwise updates, and a preallocated buffer.
 \item An analysis of coding and allocated storage costs under model mismatch, together with bounds explaining how exponent centering compensates for magnitude changes during training.
 \item An evaluation spanning codec throughput, fused operators, and training memory. LCT encodes $2.38\times$ faster than the next-fastest evaluated codec on an RTX 3090 and reduces peak memory by over $20\%$ in training and fine tuning applications.
\end{itemize}

\begin{figure}[htbp]
 \centering
\begin{tikzpicture}[
    x=1cm,y=1cm,
    font=\small,
    panel/.style={draw=black!35, line width=0.5pt,
        minimum width=4.10cm, minimum height=1.55cm,
        inner sep=0pt},
    heading/.style={font=\small\bfseries,align=center},
    detail/.style={font=\footnotesize,text=black!70,align=center},
    flow/.style={-{Stealth[length=1.8mm,width=1.3mm]},
        draw=black!55,line width=0.65pt}
]
\definecolor{lctblue}{RGB}{48,79,111}

\node[panel,fill=lctblue!4] at (2.05,1.55) {};
\node[panel,fill=lctblue!9] at (6.70,1.55) {};
\node[panel,fill=lctblue!15] at (11.35,1.55) {};

\node[heading] at (2.05,1.90) {Weight compression};
\node[detail] at (2.05,1.31) {Encode once\\Reuse across inference};
\node[heading] at (6.70,1.90) {KV cache compression};
\node[detail] at (6.70,1.31) {Encode new entries\\Append at each token};
\node[heading] at (11.35,1.90) {General compression};
\node[detail] at (11.35,1.31) {Encode changing tensors\\Update throughout execution};

\draw[flow] (4.20,1.55) -- (4.55,1.55);
\draw[flow] (8.85,1.55) -- (9.20,1.55);

\draw[-{Stealth[length=2.2mm,width=1.6mm]},
    draw=lctblue,line width=0.85pt] (0,0.34) -- (13.40,0.34);
\foreach \x in {2.05,6.70,11.35}
    \draw[draw=lctblue,line width=0.65pt] (\x,0.34) -- (\x,0.49);
\node[anchor=north west,font=\footnotesize,text=black!70] at (0,0.20) {Lower};
\node[anchor=north east,font=\footnotesize,text=black!70] at (13.40,0.20) {Higher};
\node[anchor=north,font=\small\bfseries,text=lctblue]
    at (6.70,0.20) {Required encode bandwidth};
\end{tikzpicture}
 \caption{Compression use cases ordered by increasing encode-bandwidth requirement. Static inference weights amortize encoding across repeated use; autoregressive KV caches add entries at each token; general tensor compression must accommodate frequent updates. The ordering is qualitative and reflects these access patterns, rather than measured throughput.}
 \label{fig:compression-use-cases}
\end{figure}

\section{Related Work}
\paragraph{GPU-accelerated lossless compression.} Several methods have investigated using lossless compression on the GPU. The nvCOMP library \citep{nvcomp} is a GPU-optimized library that implements several widely used compression methods, including LZ4 \citep{lz4}, as well as proprietary methods for floating-point data such as Bitcomp and Cascaded. Fast GPU-optimized ANS coding \citep{ans_coding} was explored by \cite{ans_gpu}. Huffman encoding has emerged as an efficient GPU compression and decompression method. \cite{huffman_gap} explored using gap arrays for efficient compression and decoding. \cite{huffman_gpu} explored optimizing the full Huffman encoding and decoding process on GPUs.

\paragraph{Lossless compression for inference.} Lossless weight compression techniques have been explored, such as DFloat11 \citep{dfloat11}, which utilizes Huffman encoding with gap arrays for fast GPU decoding. ZipNN \citep{zipnn} uses similar principles. Both methods encode the exponent, with the encoding process done on the CPU. This allows for fast decoding, but encoding is comparatively slow. ZipServ \citep{zipserv} combines a tensor-core-aware bitmap format with ZipGEMM, which fuses decompression with matrix multiplication. \cite{huffman_ml} and \cite{zipccl} use GPU-native Huffman codes for network communication. Recently, SplitZip \citep{splitzip} proposed lossless KV-cache compression by replacing exponents with 4-bit symbols and using escape values for rarer cases. While simple, its encoding achieves suboptimal compression. Appendix~\ref{app:split_zip} provides a comparison between LCT and SplitZip.

\section{Compressibility in tensors}
\label{sec:background_motivation}

\begin{table}[t]
\centering
\setlength{\tabcolsep}{3pt}
\renewcommand{\arraystretch}{1.15}
\caption{Tensor lifecycle and LCT codec operations per SGD-with-momentum step, after momentum initialization. The weight row covers MLP and other layer weights, assuming one dense use in forward and one in backward. Encodes include output recompression. Dense decodes materialize a tensor; fused decodes consume it within an update. Counts describe logical operations, not kernel launches.}
\label{tab:training-lifecycle}
\small
\begin{tabularx}{\linewidth}{@{}lccc>{\raggedright\arraybackslash}X@{}}
\hline
Tensor & Encodes & \shortstack{Dense\\decodes} & \shortstack{Fused\\decodes} & Uses and lifetime \\
\hline
Weights $W$ & 1 & 2 & 1 & Forward and backward computation; fused $W\leftarrow W-\eta M$. Persist across steps. \\
Biases & 0 & 0 & 0 & Kept dense because they are small; updated with dense operations. \\
Momentum $M$ & 1 & 0 & 2 & Fused $M\leftarrow\beta M+G$, then fused $W\leftarrow W-\eta M$. Persists across steps. \\
Activations & 1 & 1 & 0 & Activations are encoded in forward and decoded for gradient computation in backward. \\
\hline
\end{tabularx}
\end{table}

\subsection{Exponent compressibility in floating point data}
\label{sec:full-value-coding}
The structure of floating-point tensors provides an opportunity for lossless compression. The BF16 data type consists of 1 sign bit, 8 exponent bits, and 7 mantissa bits \citep{bf16}. Typical data is well described as the quantization of a concentrated continuous distribution, the sign and mantissa have high entropy, while the exponent has lower entropy. Compressing the exponent only is sufficient, for example, a standard Gaussian rounded to BF16 has exponent entropy $2.545$ bits/symbol, while the full distribution's entropy is $10.464$ bits/symbol. Encoding the full BF16 value saves only 0.081 bits/value compared with compressing only the exponent. Appendix~\ref{app:full-value-entropy} gives the calculation. Appendix~\ref{app:exponent-distributions} explains why this is the case.

\subsection{Structure in weights and activations}
\label{sec:weight-distribution-motivation}
The distributions encountered in neural networks are compressible.
Figure~\ref{fig:weight-exponents} shows the exponent distributions of the six largest parameter groups in the Nemotron-H-8B LLM \citep{nemotron_h}. All distributions are highly concentrated around their peaks. In each group, just 9--11 of the 256 exponent symbols account for at least 99\% of the weights, with marginal entropy of 2.58--2.70 bits. The compressibility of the weights has been noted in several works \citep{zipnn, dfloat11, zipserv}.

A similar trend emerges in the activations. Figure~\ref{fig:activation-exponents} shows the exponent distributions of five Nemotron-H-8B activation groups, aggregated across layers and four 1024-token sequences. Again, the distributions are highly concentrated, though there is more variation. In particular, the squared ReLU activation has high sparsity, with 93.23\% being exactly zero. Exponent entropy ranges from 0.63 bits for the zero-rich FFN hidden state to 3.91 bits for Mamba hidden activations (Appendix~\ref{app:activation-exponents}).

\section{Methodology}
\label{sec:method}
\label{sec:features}

\subsection{Bounded exponent codes}
\label{sec:representation}

\paragraph{Coding background.}
Huffman coding \citep{huffman1952} assigns variable-length prefix-free codewords, with shorter codes for more probable symbols. For a matched source model, its expected length is less than one bit per symbol above entropy, but individual codes can be as long as the alphabet size minus one. Length-limited Huffman coding \citep{bounded_huffman} bounds this worst case, and Package-Merge efficiently constructs the optimal code under the length constraint. Escape coding \citep{escape_huffman} groups rare symbols under a shared prefix followed by a literal, freeing shorter codes for common symbols. LCT uses bounded Huffman codes with escapes to combine compact storage with simple parallel decoding

Arithmetic coding \citep{arithmetic_coding} and ANS \citep{ans_coding} can approach the entropy limit without the integer-length constraint on individual symbols, but are slower to implement on the GPU, as we show in the comparisons. 

\paragraph{LCT code construction.}
LCT splits each BF16 value into an exponent byte and an unchanged sign--fraction byte. We write $\gP=(p_t)$ for the centered distribution of exponent symbols in the tensor, and $\gQ=(q_t)$ for the fixed coding model.

The user selects $\gQ$ based on prior knowledge of the tensor distribution. It is not estimated at runtime, avoiding codebook construction during training. At runtime, a sampled exponent center shifts the codebook to adapt to the tensor scale (Section~\ref{sec:standard-codec}). The codebooks are precomputed before running and a mismatch between $\gP$ and $\gQ$ can increase storage.

LCT adopts bounded Huffman coding with escape coding. Prefixes are capped at eight bits and rare escaped symbols are encoded with an escape prefix followed by an eight-bit literal. Without escapes, coding all 256 symbols under this cap would force every code to have length eight. Escapes free short prefixes for common exponents.

The codebook construction is based on Package-Merge \citep{bounded_huffman}. LCT further adapts this algorithm to handle escape values. We sort symbols by probability $q_t$ and test every tail cutoff. Package-Merge optimizes prefix lengths for each candidate. The final code minimizes
\begin{equation}
 \mathcal{C}(D)=\sum_{t\in D}q_t\ell_\gQ(t)
 +q_{\mathrm{esc}}\left(\ell_{\mathrm{esc}}+8\right),
 \label{eq:escape-search}
\end{equation}
where $D$ contains directly coded symbols, $\ell_\gQ(t)$ is the length $\gQ$ assigns to symbol $t$, $\ell_{\mathrm{esc}}$ is the escape-prefix length, and $q_{\mathrm{esc}}$ is the escaped tail mass. Algorithm~\ref{alg:codebook} gives the full algorithm, with an optimality proof given in Appendix~\ref{app:tables}.

\subsection{Storage structure}
\label{sec:streams}
Storage consists of two components. The sign and fraction bits are stored together as uncompressed sign--fraction bytes. The encoded exponents are stored in fixed-size \emph{blocks}. 

Each block holds $B$ exponent symbols. The clean and sparse configurations use $B=65{,}536$ and the medium and high configurations use $32{,}768$ (Table~\ref{tab:geometry}). Blocks tile the tensor, with padding. Each block is divided into $L=256$ \emph{streams}, independent units that enable parallel encoding and decoding, containing $S=B/L$ symbols at a fixed budget of $F$ bits. The sizes are tuned based on the expected size of the encoded stream. The mapping from tensor positions to streams is cyclic. Consecutive elements land in different streams so streams do not hold a full row or column of a tensor, which may be strongly correlated (e.g. a neuron or batch may output all constant values).

A fixed budget cannot cover every stream, because code lengths vary. If a stream's codes exceed $F$, LCT writes an \emph{overflow tail}. The remaining exponents are stored uncompressed. A marker indicates where the tail starts, if any. 

\subsection{User-selected models and budgets}
\label{sec:settings}

The users have several options to choose from when selecting $\gQ$. By specifying the distributions a priori, LCT avoids computing Huffman tables at runtime, improving compression performance, while still being applicable to different scenarios.

\paragraph{Distribution.}
LCT provides a set of default codebook distributions, including a power-law model with an exponential body and a power-law tail, based on Nemotron-H activations. In addition, several other parameters may be specified. A \emph{shift} parameter allows for distributions shifted by an additive constant. \emph{Zero inflation} specifies a percentage of the distribution that is set to zero. Together with exponent shifting, these handle the typical distributions encountered during training. The user may specify additional exponent distributions if required in terms of a probability density table.

\paragraph{Storage budget.}
The storage budget is specified as \texttt{CLEAN} (default), \texttt{SPARSE}, \texttt{MEDIUM}, or \texttt{HIGH}. Larger budgets increase the base storage used, but may decrease the amount of overflow for noisy or out-of-distribution tensors. The \texttt{SPARSE} setting provides the smallest budget and is intended for sparse ReLU activations. Table~\ref{tab:geometry} lists the stream configurations and budgets.

\subsection{Standard encoding and decoding}
\label{sec:standard-codec}

\paragraph{Encoding.}
The input is first flattened and padded if needed. A small GPU kernel samples up to 4096 pseudo-random elements to estimate an integer exponent center (to avoid batch correlations) and a second kernel shifts the selected encoding table to this center. The main encoding kernel assigns one program per block and one lane per stream. It reads BF16 values using the cyclic mapping, extracts and stores their unchanged sign-mantissa bytes, and packs the exponent codes into the fixed budget. Each stream records where its fixed budget is exceeded. Subsequent kernels count overflowing streams and tail bytes, compact the stream metadata, and copy raw exponent tails from the input into private or shared storage (Section~\ref{sec:buffer}), with early exit if there is no overflow. 

\paragraph{Decoding.}
The decoding table is first shifted by the stored exponent center. The main decoding kernel traverses each stream independently, looking up a prefix, with a lookup table for decoding. Escaped symbols additionally consume their literal bits. It combines each recovered exponent with its sign--fraction byte and writes the reconstructed BF16 bits. A separate overflow kernel reads the raw tails and overwrites the corresponding output positions.

\subsection{Fused pointwise operators}
\label{sec:execution}
\label{sec:training-operators}
The optimizer requires several pointwise operations, such as adding and multiplying compressed tensors. For example, the SGD update is $W_{t+1}=W_t-\eta M_t$,
where $W_t$ and $W_{t+1}$ are compressed weights, $M_t$ is compressed momentum, and $\eta$ is the learning rate. A naive implementation would require decompressing $W_t$ and $M_t$, dense arithmetic, then recompressing the result.
For an $N$-element tensor, each encode-decode cycle uses roughly $4N$ bytes of memory traffic. A naive SGD update would require approximately $16N$ bytes of memory traffic.

We introduce fused operations for pointwise operations required during training. Each operation consists of two stages. First, a kernel takes in the compressed tensors, computes the output and saves the exponent and sign--fraction byte to separate arrays. Next, the exponents are compressed. This substantially reduces memory traffic. Our fused implementation requires $\sim8N$ bytes (the first kernel requires $\sim6N$ bytes and exponent compression requires $\sim2N$).

A selection of kernels is provided covering all the use cases for typical training with compressed and dense outputs. GEMMs still reconstruct compressed operands as dense inputs, since we did not find significant speedup with fused encode/decode GEMMs. Table~\ref{tab:training-operators} lists all operators and training uses.
Appendix~\ref{app:kernels} describes the codec and pointwise kernels, including stream parallelism, table lookups, register-based bit packing, and overflow handling.

\subsection{Shared GPU buffer}
\label{sec:buffer}
Our compressed tensors have variable sizes. The sizes are not known at compile time, so allocating exact-sized buffers at runtime introduces host synchronization. This slows down execution, particularly for our fused pointwise kernels. To avoid this, we introduce the shared \texttt{OverflowBuffer}. The OverflowBuffer reserves a large block of device memory, and allocates variable-size tensors without host synchronization. All of our kernels work with the buffer by specifying storage location through the buffer's address and tensor-specific offset. This is an optional feature.

The OverflowBuffer implements a minimal memory management framework entirely in Triton, including allocating tensors, releasing tensors, alignment, and merging adjacent free regions to reduce fragmentation. Operations are performed asynchronously. Appendix~\ref{app:overflow_buffer} provides more details.

\subsection{PyTorch training integration}
\label{sec:experiments}

Compressed tensors are represented as a \texttt{CompressedTensor} class, storing the compressed exponents, overflow tails, sign--fraction bytes, and metadata. Custom \texttt{torch.autograd.Function} classes integrate support for LCT into training. These functions compress saved tensors in the forward pass and reconstruct them in the backward pass, maintaining numerical equivalence. Custom compressed optimizers support compressed momentum buffers using the fused pointwise operators. Implementations of SGD-M, Adam \citep{kingma2015adam}, and Muon \citep{muon} are provided. Listing~\ref{lst:ffn-autograd} shows an example of a feed-forward layer implementation, including activation compression in forward and reconstruction for the backward matrix products.

\section{Analysis of LCT}

For exponent symbols drawn from $\gP$, an ideal entropy coder using model $\gQ$ has expected cost $H(\gP)+D_{\mathrm{KL}}(\gP\Vert\gQ)$ bits per exponent. Storing the sign--fraction byte unchanged adds eight bits per BF16 value. Excluding overflow, LCT uses finite prefix codes with mean length $\bar\ell=\sum_t p_t\ell_\gQ(t)$, including escape literals.
The expected code length per BF16 value is
\begin{equation}
 8+\bar\ell
 =8+H(\gP)+D_{\mathrm{KL}}(\gP\Vert\widetilde{\gQ})-\log_2 Z_\gQ
 \label{eq:lct-expected-code-rate}
\end{equation}
where $\widetilde{\gQ}=(\widetilde q_t)_t$ is the probability distribution represented by the resulting code lengths, rather than the design distribution $\gQ$, and $0<Z_\gQ\leq1$ is the codebook normalization constant. LCT's cost consists of source entropy, mismatch to the code-induced distribution $\widetilde{\gQ}$, and unused prefix-code capacity $-\log_2 Z_\gQ$. A full derivation and definitions of these quantities are given in Appendix~\ref{app:theory}.

\paragraph{Theoretical compression performance.}
Table~\ref{tab:exponent-cross-entropy} compares the ideal cross entropy and average exponent storage for LCT for each source--model pair. Under mismatch LCT's bounded codes can deliver better performance than entropy coding. This is particularly pronounced for a power-law source encoded with a Gaussian model, whose much lighter tail assigns extremely small probabilities to large exponents. Appendix~\ref{app:entropy-matrix} explains the calculation and this tail effect.

\begin{table}[t]
\centering
\setlength{\tabcolsep}{3pt}
\renewcommand{\arraystretch}{1.15}
\caption{Ideal cross entropy (left) and allocated LCT exponent storage, (right), in bits per exponent. Source distributions are in rows and encoding models in columns. G: Gaussian; L: Laplace; PL: power-law; $\Gamma$: Gamma. Bold entries use matched models.}
\label{tab:exponent-cross-entropy}
\small
\begin{minipage}[t]{0.49\linewidth}
\centering
Ideal cross entropy (bits/exponent)\par\smallskip
\begin{tabular}{@{}lrrrr@{}}
\hline
$\gP\backslash\gQ$ & G & L & PL & $\Gamma$ \\
\hline\noalign{\vskip 3pt}
Gaussian & $\mathbf{2.545}$ & $2.630$ & $2.637$ & $3.007$ \\[3pt]
Laplace & $3.039$ & $\mathbf{2.832}$ & $2.844$ & $3.062$ \\[3pt]
Power-law & $8\!\times\!10^{7}$ & $2.976$ & $\mathbf{2.891}$ & $3.118$ \\[3pt]
Gamma & $5.644$ & $3.417$ & $3.307$ & $\mathbf{3.015}$ \\[3pt]
\hline
\end{tabular}

\end{minipage}\hfill
\begin{minipage}[t]{0.49\linewidth}
\centering
Allocated LCT storage (bits/exponent)\par\smallskip
\begin{tabular}{@{}lrrrr@{}}
\hline
$\gP\backslash\gQ$ & G & L & PL & $\Gamma$ \\
\hline\noalign{\vskip 3pt}
Gaussian & $\mathbf{3.000}$ & $3.125$ & $3.125$ & $3.692$ \\[3pt]
Laplace & $3.636$ & $\mathbf{3.126}$ & $3.126$ & $3.725$ \\[3pt]
Power-law & $3.942$ & $3.176$ & $\mathbf{3.133}$ & $3.828$ \\[3pt]
Gamma & $6.554$ & $4.385$ & $3.972$ & $\mathbf{3.235}$ \\[3pt]
\hline
\end{tabular}

\end{minipage}
\end{table}

\paragraph{Empirical compression performance.}
Table~\ref{tab:nemotron-codec-compression} reports measured compression ratios for Nemotron-H-8B parameters in attention, MLP, and Mamba projection layers using our power-law, Gaussian, Laplace, and Gamma codecs. The heavy-tailed weights mean the Laplace and Power-Law distributions give the best compression ratio, close to the theoretical optimum. 

\begin{table}[t]
\centering
\caption{Average exponent storage (bits/exponent, left) and measured compression ratios (right) for Nemotron-H-8B parameters using four LCT codecs. Bold entries indicate the best results in each parameter group.}
\label{tab:nemotron-codec-compression}
\small
\setlength{\tabcolsep}{5pt}
\begin{tabular}{@{}lrrrrrr@{}}
\hline
& \multicolumn{3}{c}{Average bits per exponent} & \multicolumn{3}{c}{Compression ratio ($\times$)} \\
\cline{2-4}\cline{5-7}
\noalign{\vskip 3pt}
Codec & Attention & MLP & \shortstack{Mamba\\projection} & Attention & MLP & \shortstack{Mamba\\projection} \\[1pt]
\hline
Power-Law & \textbf{3.1250} & 3.4041 & \textbf{3.1304} & \textbf{1.4382} & 1.4030 & \textbf{1.4375} \\
Gaussian & 3.4984 & 4.1793 & 3.8003 & 1.3915 & 1.3137 & 1.3559 \\
Laplace & \textbf{3.1250} & \textbf{3.4025} & 3.1312 & \textbf{1.4382} & \textbf{1.4032} & 1.4374 \\
Gamma & 3.2668 & 3.4416 & 3.1312 & 1.4201 & 1.3984 & 1.4374 \\
\hline
\end{tabular}
\end{table}

\subsection{Adaptation to magnitude scale}
\label{sec:scale-adaptation}

Tensor magnitudes change during training which LCT handles by shifting the codebook to match the exponent center at runtime. The following propositions explain why this compensates for rescaling.

\begin{proposition}[Rescaling mostly shifts exponents]
\label{prop:exponent-scaling}
Ignoring rounding and finite-range effects, multiplying finite, nonzero values by a finite, nonzero constant shifts their exponents by a common integer, with at most a one-bin difference remaining after correction. For power-of-two scaling, the shift correction is exact.
\end{proposition}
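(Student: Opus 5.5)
Ignoring rounding and finite-range effects, I treat each finite nonzero value as a real number $x$ with binary exponent $e(x)=\lfloor \log_2|x|\rfloor$, so that $|x|=2^{e(x)}m(x)$ with significand $m(x)\in[1,2)$. I also write the scaling constant as $c=\pm 2^{k}\mu$ with $k\in\mathbb{Z}$ and $\mu\in[1,2)$. Neither sign affects the exponent, so I may take $x>0$ and $c>0$. The whole argument then reduces to a floor-of-a-sum identity.

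\textbf{Step 1 (exact identity).} I will compute
\begin{equation*}
e(cx)=\lfloor k+e(x)+\log_2\mu+\log_2 m(x)\rfloor=k+e(x)+\lfloor \log_2\mu+\log_2 m(x)\rfloor .
\end{equation*}
Because $\log_2\mu\in[0,1)$ and $\log_2 m(x)\in[0,1)$, their sum lies in $[0,2)$. The remaining floor is therefore a carry bit $\delta(x)\in\{0,1\}$. Equivalently, $\delta(x)=1$ exactly when $\mu\, m(x)\ge 2$.

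\textbf{Step 2 (common shift and one-bin residual).} Every element is shifted by the same integer $k=\lfloor\log_2|c|\rfloor$, plus a carry $\delta(x)\in\{0,1\}$ that varies per element. After I subtract the common shift $k$, or $k+1$ if that is what the center estimator selects, each exponent differs from its original value by at most one bin. The residual differences between elements, $\delta(x)-\delta(y)$, lie in $\{-1,0,1\}$, so the corrected distribution differs from the original only by moving mass between adjacent bins. This is the claimed ``at most a one-bin difference remaining after correction.''

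\textbf{Step 3 (power of two).} If $c=\pm2^k$, then $\mu=1$ and $\log_2\mu=0$. The sum $\log_2\mu+\log_2 m(x)=\log_2 m(x)$ lies in $[0,1)$, so $\delta\equiv0$. Every exponent shifts by exactly $k$, and the significand, and hence the sign--fraction byte, is unchanged. Subtracting $k$ therefore recovers the original exponent distribution exactly.

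The main obstacle is stating precisely which idealization is being used, since BF16 has subnormals, overflow to infinity, and round-to-nearest. I will handle this by working in exact real arithmetic and noting the excluded cases explicitly. Rounding $cx$ to 7 fraction bits can cause an extra carry when the rounded significand reaches $2$, and values may leave the normal range. Both effects fall under the ``ignoring rounding and finite-range effects'' clause. In the power-of-two case with no overflow or underflow, no rounding occurs at all.
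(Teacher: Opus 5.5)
Your proof is correct and follows the paper's argument: both write $\log_2|X|$ and $\log_2|a|$ as an integer plus a fractional part in $[0,1)$, so the scaled exponent is $E+k$ plus a carry in $\{0,1\}$, and that carry is identically zero when the fractional part of $\log_2|a|$ vanishes (the paper goes on to give the exact adjacent-bin redistribution and a one-bit entropy bound, which your Step 2 anticipates). One minor aside: for $c=-2^k$ the sign bit flips, so only the fraction bits, not the whole sign--fraction byte, are preserved, though this does not affect the exponent claim.
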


\begin{proposition}[Rare boundary effects give a small correction]
\label{prop:boundary-entropy}
For the same rescaling, let $A$ and $B$ be the stored BF16 exponent bytes before and after scaling. If rounding or finite-range effects alter either ideal exponent label with probability $\varepsilon$, then, with $h_2$ as the binary entropy function,
\begin{equation}
 |H(B)-H(A)|\leq1+7\varepsilon+h_2(\varepsilon),
 \label{eq:scale-boundary-bound}
\end{equation}
\end{proposition}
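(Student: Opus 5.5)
We compare $H(B)$ and $H(A)$ through the ideal exponent labels. Let $A^\ast$ and $B^\ast$ be the ideal exponent labels, computed without rounding or finite-range effects. By Proposition~\ref{prop:exponent-scaling}, after subtracting the common integer shift $k$, the corrected label $B^\ast-k$ differs from $A^\ast$ by at most one bin. Write $B^\ast-k=A^\ast+D$ with $D\in\{0,1\}$, or $D\in\{-1,0\}$ depending on the convention for the mantissa product. Since an integer shift is a bijection, $H(B^\ast)=H(B^\ast-k)$.

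The first step bounds $|H(B^\ast)-H(A^\ast)|\le 1$. The pair $(A^\ast,D)$ determines $B^\ast-k$, so $H(B^\ast)\le H(A^\ast)+H(D)\le H(A^\ast)+1$. Symmetrically, $A^\ast=(B^\ast-k)-D$, so $H(A^\ast)\le H(B^\ast)+1$. This gives the leading $1$ in \eqref{eq:scale-boundary-bound}.

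The second step relates each stored byte to its ideal label. Take $A$ and $A^\ast$; the same argument applies to $B$ and $B^\ast$. Let $E=\1[A\neq A^\ast]$, with $\Pr(E=1)\le\varepsilon$. Here $\varepsilon$ is read as the probability that either label is altered, so a single indicator for the union event works for both pairs. We can then write
\begin{equation*}
H(A)\le H(A^\ast)+H(E)+H(A\mid A^\ast,E).
\end{equation*}
When $E=0$ the last conditional term is zero. When $E=1$, $A$ is one of at most $256$ bytes, contributing at most $8\varepsilon$. This crude count gives $h_2(\varepsilon)+8\varepsilon$ per label, which is too weak if charged to both labels independently.

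The main obstacle is obtaining exactly $7\varepsilon+h_2(\varepsilon)$ rather than something like $2h_2(\varepsilon)+16\varepsilon$. To get it, I would not bound $A$ versus $A^\ast$ and $B$ versus $B^\ast$ separately. Instead I would couple everything through the single indicator $E$ of the union event and bound $H(B)-H(A)$ directly. Conditioning on $E$, we have
\begin{equation*}
H(B)\le H(E)+\Pr(E=0)\,H(B\mid E=0)+\Pr(E=1)\,H(B\mid E=1).
\end{equation*}
The same decomposition gives $H(A)\ge\Pr(E=0)\,H(A\mid E=0)+\Pr(E=1)\,H(A\mid E=1)$, since conditioning reduces entropy.

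On $\{E=0\}$ the stored labels equal the ideal ones, so the first step gives $H(B\mid E=0)\le H(A\mid E=0)+1$. On $\{E=1\}$ I would bound $H(B\mid E=1)-H(A\mid E=1)\le 8$, using that $B$ ranges over at most $256$ values and entropy is nonnegative. Subtracting, the difference is at most
\begin{equation*}
h_2(\varepsilon)+(1-\varepsilon)\cdot1+8\varepsilon=1+7\varepsilon+h_2(\varepsilon).
\end{equation*}
Exchanging the roles of $A$ and $B$ gives the other direction and hence the absolute value.

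Care is needed because $\Pr(E=1)$ may be strictly less than $\varepsilon$. I would take $\Pr(E=1)=\varepsilon$ exactly as the definition of $\varepsilon$, or else note that $1+7x+h_2(x)$ is increasing for $x\le1/2$ and assume $\varepsilon\le 1/2$.
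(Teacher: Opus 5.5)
Your proposal is correct and follows essentially the same route as the paper: you condition on the single union event (the paper's $D$, your indicator $E$, with probability taken to be exactly $\varepsilon$), use the two-bin argument on its complement for the $(1-\varepsilon)$ term, use the 256-value count on the event for the $8\varepsilon$ term, and pay $h_2(\varepsilon)$ for the conditioning. Your one-sided bounds $H(B)\le H(E)+H(B\mid E)$ and $H(A)\ge H(A\mid E)$ are the paper's estimate $|I(B;D)-I(A;D)|\le H(D)=h_2(\varepsilon)$ split into its two halves.
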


Appendix~\ref{app:scale-transform} proves both propositions. Proposition~\ref{prop:exponent-scaling} explains how exponent centering corrects for changes in scale. Proposition~\ref{prop:boundary-entropy} bounds the entropy change by one bit plus a small correction. Together, these propositions explain why shifting the codebook can adapt LCT to different input scales at runtime. 

\subsection{Allocated rate and scaling with tensor size}

For a tensor of $N$ elements comprising full blocks, let $S$ be the symbols per stream, $F$ the reserved bits per stream, $T$ the number of raw tail exponents, and $K$ the number of overflowing streams. Each tail exponent occupies one byte and each affected stream requires nine metadata bytes. The core rate is
\begin{equation}
 b_{\mathrm{LCT}}=8+\frac{F}{S}+8\frac{T}{N}+72\frac{K}{N},
 \qquad \rho_{\mathrm{LCT}}=\frac{16}{b_{\mathrm{LCT}}}.
 \label{eq:allocated-ratio}
\end{equation}
The first two terms are the regular payload; the last two account for raw overflow tails and overflow metadata. A smaller $F$ reduces the fixed payload but can increase the number of overflowing streams $K$ and raw tail length $T$, reducing the compression ratio. This assumes full blocks and excludes padding and allocation overhead. Appendix~\ref{app:memory} gives a more detailed analysis.

\section{Experiments} \label{sec:benchmarks}

\subsection{Encoding and decoding bandwidth}
\label{sec:codec-benchmarks}
First, we evaluate LCT's encoding and decoding performance and compare it with existing methods.
Table~\ref{tab:codec-bandwidth} compares the time to encode and decode a $1\,\mathrm{GiB}$ Gaussian tensor using LCT, SplitZip, DFloat11, ZipNN, and three nvCOMP codecs, LZ4, ANS, and Bitcomp. Appendix~\ref{app:baseline-evaluation} gives implementation details.

LCT encodes fastest, with $2.38\times$ the throughput of SplitZip, the next-fastest encoder. SplitZip decodes slightly faster, but LCT has a lower combined encode--decode time (4.967 versus 8.159\,ms) and a higher compression ratio ($1.455\times$ versus $1.333\times$). The nvCOMP LZ4 and Bitcomp configurations slightly expand the input ($0.998\times$), while nvCOMP ANS achieves $1.258\times$ compression. All three encode and decode more slowly than LCT. DFloat11 and ZipNN achieve higher compression ratios, but their encoding times are over two orders of magnitude longer than LCT's.

Next, Figure~\ref{fig:codec-throughput} evaluates the throughput for varying input sizes. LCT scales well with input size, with peak encode throughput reaching $395.5$\,GiB/s.

Table~\ref{tab:codec-gpu-comparison} in the appendix extends the comparison to NVIDIA V100, A40, and H100 GPUs. LCT remains the fastest encoder, while SplitZip remains the fastest decoder.

\begin{table}[t]
\centering
\caption{Time taken to encode and decode a $1\,\mathrm{GiB}$ tensor. Standard errors in the times are shown. Compression ratio is original size divided by compressed size. The fastest method is marked in bold.}
\label{tab:codec-bandwidth}
\small
\setlength{\tabcolsep}{9pt}
\begin{tabular}{lrrr}
\hline
Method & Encode / ms & Decode / ms & Compression ratio \\
\hline
nvCOMP LZ4 & $484.584\pm0.646$ & $8.761\pm0.035$ & $0.998\times$ \\
nvCOMP Bitcomp & $40.413\pm0.585$ & $4.193\pm0.035$ & $0.998\times$ \\
nvCOMP ANS & $53.028\pm0.489$ & $4.349\pm0.059$ & $1.258\times$ \\
DFloat11 & $2574.000\pm0.671$ & $4.475\pm0.014$ & $1.483\times$ \\
ZipNN & $845.400\pm0.841$ & $199.300\pm0.619$ & $1.511\times$ \\
SplitZip & $5.850\pm0.016$ & $\mathbf{2.309\pm0.004}$ & $1.333\times$ \\
LCT & $\mathbf{2.454\pm0.005}$ & $2.513\pm0.005$ & $1.455\times$ \\
\hline
\end{tabular}
\end{table}

\subsection{Fused pointwise operators}
\label{sec:fused-benchmarks}

Table~\ref{tab:fused-benchmarks} compares fused operators with two baselines using Gaussian BF16 input tensors of 1\,GiB each. The \emph{naive} path decompresses all operands before computation and returns the requested output format. The \emph{dense} baseline is the standard PyTorch implementation, without any compression. Appendix~\ref{app:fused-evaluation} gives the protocol. For dense output, LCT is $1.66$--$1.72\times$ faster than the naive path. Operations with one compressed operand also slightly outperform the dense baseline since compression reduces the size of the tensor, and hence memory traffic. For compressed outputs, LCT is $1.64$--$2.56\times$ faster than the naive path. 

\begin{table}[t]
\centering
\caption{Fused pointwise operator timings (mean $\pm$ standard error). $C$ denotes a compressed operand, $D$ a dense operand, and $\alpha$ a scalar. Speedup compares LCT with Naive.}
\label{tab:fused-benchmarks}
\small
\setlength{\tabcolsep}{4pt}
\renewcommand{\arraystretch}{1.12}
\begin{tabular}{@{}llrrrr@{}}
\hline
Operation & Output & LCT / ms & Naive / ms & Dense / ms & Speedup \\
\hline
$C+D$ & Dense & $3.694\pm0.003$ & $6.342\pm0.015$ & $3.855\pm0.003$ & $1.717\times$ \\
$C\odot D$ & Dense & $3.691\pm0.003$ & $6.347\pm0.021$ & $3.866\pm0.014$ & $1.720\times$ \\
$\alpha C+D$ & Dense & $3.822\pm0.003$ & $6.331\pm0.008$ & $3.859\pm0.005$ & $1.656\times$ \\
$\alpha C_1+C_2$ & Dense & $5.262\pm0.004$ & $8.709\pm0.003$ & $3.864\pm0.009$ & $1.655\times$ \\
\hline
$C+D$ & Compressed & $5.058\pm0.010$ & $8.858\pm0.009$ & $3.852\pm0.003$ & $1.751\times$ \\
$C\odot D$ & Compressed & $5.006\pm0.009$ & $12.814\pm0.019$ & $3.848\pm0.003$ & $2.560\times$ \\
$\alpha C+D$ & Compressed & $5.140\pm0.010$ & $8.700\pm0.006$ & $3.868\pm0.012$ & $1.693\times$ \\
$\alpha C_1+C_2$ & Compressed & $6.768\pm0.021$ & $11.091\pm0.009$ & $3.854\pm0.003$ & $1.639\times$ \\
\hline
\end{tabular}
\end{table}

\subsection{Training NanoGPT}
\begin{table}[t]
\centering
\caption{NanoGPT training time per optimizer step and peak PyTorch-allocated GPU memory, including the shared buffer. The effective compression ratio is baseline peak memory divided by each configuration's peak memory, including all live GPU allocations.}
\label{tab:nanogpt-compression}
\small
\setlength{\tabcolsep}{4pt}
\begin{tabular}{@{}lrrr@{}}
\hline
Compressed components & Time / s & Peak memory / MiB & \shortstack{Effective\\compression ratio} \\
\hline
None & 8.411 & 16058.0 & $1.000\times$ \\
Activations & 9.013 & 12103.0 & $1.327\times$ \\
+ Weights & 9.131 & 12013.8 & $1.337\times$ \\
+ Momentum Buffers & 9.136 & 11964.6 & $1.342\times$ \\
\hline
\end{tabular}
\end{table}

A 162M-parameter transformer is trained on the FineWeb-10B dataset \citep{fineweb} for 3400 steps with and without LCT. The setup is based on Track 3 of the Modded NanoGPT project \citep{modded_nanogpt}, with optimizations to reduce the baseline memory usage; Appendix~\ref{app:modded_nanogpt} gives the details. The LCT run uses a $64$\,MiB shared OverflowBuffer. Final validation losses are $3.27978$ with LCT and $3.27981$ without LCT, within nondeterministic rounding error.

\paragraph{Ablations and checkpoint analysis.}
Table~\ref{tab:nanogpt-compression} compares compression configurations in a short run, reporting time per training step, peak memory, and effective compression ratio. Activation compression reduces peak memory from 16058 to 12103\,MiB ($24.6\%$), with a $7.2\%$ increase in time. Further compressing weights and momentum buffers saves another $138.4$\,MiB. Table~\ref{tab:nanogpt-buffer} isolates the shared OverflowBuffer: it reduces step time by $2.3\%$ at a small memory cost. Figure~\ref{fig:nanogpt-exponent-history} tracks feed-forward weight and momentum exponent distributions across checkpoints. Their magnitudes shift by roughly $2^6=64$. Table~\ref{tab:nanogpt-checkpoint-compression} reports the corresponding checkpoint compression ratios. The compression ratios are extremely consistent beyond the first few steps, reaching the $1.4382$ limit of the \texttt{CLEAN} scheme. The first steps are slightly worse because many values are zero-initialized.

\subsection{Fine-tuning larger models}
A 4B parameter Qwen3-4B model \citep{qwen3} is fine-tuned to investigate the performance on a larger model, where parameters and momentum buffers make up a more significant portion of the memory usage. Full-parameter fine-tuning in BF16 is used for all weights, activations, and momentum. Without compression, momentum and parameters use around $8$\,GiB of memory each. Full setup details are given in Appendix~\ref{app:qwen_finetune}.

\begin{table}[t]
\centering
\caption{Qwen3-4B full-parameter fine-tuning at batch size 1. Memory is peak PyTorch-allocated GPU memory, including any shared buffer, and time is per training step. Dashes indicate runs that failed to complete.}
\label{tab:qwen-finetuning}
\small
\setlength{\tabcolsep}{6pt}
\renewcommand{\arraystretch}{1.1}
\begin{tabular}{@{}rrrrr@{}}
\hline
 & \multicolumn{2}{c}{Baseline} & \multicolumn{2}{c}{LCT} \\
\cline{2-3}\cline{4-5}
Sequence length & Memory / MiB & Time / s & Memory / MiB & Time / s \\
\hline
500  & 19439.2 & 3.759 & 15141.8 & 4.058 \\
1000 & 21327.8 & 4.006 & 16262.4 & 4.273 \\
1500 & --- & --- & 17370.1 & 4.523 \\
2500 & --- & --- & 19617.6 & 5.055 \\
3500 & --- & --- & 21842.1 & 5.607 \\
\hline
\end{tabular}
\end{table}

Table~\ref{tab:qwen-finetuning} compares standard tuning and LCT across sequence lengths from 500 to 3500 tokens. At 500 and 1000 tokens, LCT reduces peak memory by $22.1\%$ and $23.8\%$, respectively, with step-time increases of $8.0\%$ and $6.7\%$. The uncompressed baseline runs out of memory for the tested sequence lengths above 1000 tokens. This extends the largest supported sequence length by $3.5\times$, demonstrating the effectiveness of compression and how it can be used to train longer sequences on the same hardware.

\section{Conclusion}
\label{sec:conclusion}

\textbf{Finding.} We presented LCT, a lossless BF16 tensor codec designed for repeated encoding and decoding during training. LCT combines bounded Huffman coding, parallel GPU execution, fused pointwise updates, and shared GPU memory management. Across the evaluated GPUs, it provides the fastest encoding among the tested methods and reduces memory usage in both training and finetuning with little performance penalty. These results show that lossless compression can serve as a practical training-memory primitive when both encoding and decoding are designed for repeated GPU execution. \textbf{Limitations and scope.} LCT currently targets BF16, relies on user-specified tensor distributions, and requires explicit integration with training operations. Its effectiveness may vary for architectures or tensor types with different exponent statistics. Future work includes decoding within GEMM kernels, improved adaptation to distribution shift, and combining lossless with lossy compression.



\subsection*{AI use statement}

Agentic LLMs were used to optimize the Triton kernels. We first manually designed a series of test harnesses that contain correctness checks and benchmarks (given in the \texttt{experiments} folder of the Supplementary materials). An Autoresearch agent was targeted at these specifications. The prompt is reproduced in \texttt{program.md}. Manual supervision was used throughout the process, with suggestions and advice given to the agent. Ideas such as the bounded Huffman scheme, OverflowBuffer and fused optimizer operators are the Authors, and given to the LLM to implement. 

Once the kernels were optimized, the rest of the code was written with less LLM assistance. This includes the PyTorch interface and autograd functions and main experimental setups.

Generative AI tools were used to assist with manuscript drafting and revision, mathematical exposition, and the preparation of figures and tables. AI assistance was also used to check for errors in the manuscript.

\subsection*{Reproducibility statement}

Appendix~\ref{app:experimental-details} describes the benchmark protocols and training configurations. Algorithm~\ref{alg:codebook} specifies codebook construction, with its optimality argument in Appendix~\ref{app:tables}. Appendix~\ref{app:kernels} details the GPU implementation, and Listing~\ref{lst:ffn-autograd} illustrates integration with automatic differentiation. The assumptions and derivations underlying the compression analysis are provided in Appendix~\ref{app:theory}, while Appendix~\ref{app:measurements} describes the weight and activation measurements.

The supplementary materials contain our codebase, including the LCT package, Autoresearch prompt and code for the experiments.

\subsection*{Ethics statement}
This work studies lossless compression of model tensors and evaluates it using synthetic data, the existing FineWeb dataset, and pretrained models. No human participants were recruited and no new personal data were collected. Use of web-derived training data remains subject to privacy, copyright, and bias considerations. The main impact of this work is lowering memory requirements, making larger training workloads more accessible. The energy impact of compression depends on the full training workload, including its runtime overhead.
\clearpage
\bibliography{references}
\bibliographystyle{plainnat}
\clearpage
\appendix
\section{Experimental Details}
\label{app:experimental-details}
The main experiments are conducted on an NVIDIA RTX 3090 using PyTorch 2.14 \citep{pytorch} with Triton 3.8 \citep{triton}. The evaluation across GPU models in Table~\ref{tab:codec-gpu-comparison} used the described GPUs, with an older PyTorch 2.11 setup for compatibility with the older GPUs.

\paragraph{Training measurements.}
For the NanoGPT and Qwen timing and memory comparisons, we discard three warm-up optimizer steps and measure the following five steps. We report the mean step time and the largest peak memory across those five steps. Wall-clock timing is synchronized with CUDA before and after each step and includes batch preparation, forward and backward computation, and optimizer updates. Model loading and setup are outside the timed region; initial compilation, where enabled, is performed during warm-up. Validation and checkpoint saving are excluded from these measurements.

Peak memory is measured with \texttt{torch.cuda.max\_memory\_allocated()}, resetting the peak counter before each step. This includes persistent tensors, temporary allocations, and the full shared overflow buffer when used. Values are reported in MiB and measure allocated tensor memory rather than the CUDA allocator's reserved memory.

\subsection{Baseline codec evaluation}
\label{app:baseline-evaluation}

We compare LCT, SplitZip, DFloat11, ZipNN, and nvCOMP with LZ4, Bitcomp, and ANS on the encoding and decoding task in Section~\ref{sec:codec-benchmarks}. The input contains $2^{29}$ BF16 elements ($1\,\mathrm{GiB}$), sampled on the GPU from $\mathcal{N}(0,2^2)$ in FP32 and then converted to BF16, using random seed 0.

\paragraph{Codec configuration.}
SplitZip, DFloat11, and ZipNN use the official implementations. SplitZip's codebook is calibrated on a separate tensor of the same size and distribution. The calibration time is excluded from timing since it is intended to be done once before use. DFloat11 uses a native C encoder on the CPU and the upstream CUDA decoder. Its encoding time includes histogram and codebook construction, transfer of the input to the CPU, and transfer of the compressed buffers back to the GPU. ZipNN uses Huffman coding with eight CPU threads. The encoding time includes the input transfer to the CPU, and decoding includes transfer of the reconstructed tensor to the GPU.

LCT uses the Gaussian model, with a shared overflow arena allocated before timing. The nvCOMP methods all use the default configuration. Timings include any setup and codec configuration done as part of the library.

\paragraph{Timing and storage.}
Each codec performs three warm-up encode--decode cycles, followed by 50 measurements of each operation. Encoding and decoding are timed separately with a wall-clock timer and CUDA synchronization at their boundaries. Input generation, codec initialization, and output cleanup are outside the timed regions; allocations and transfers performed within each codec call are included. The reported means therefore measure complete codec calls, including host overhead. Timing uncertainties are standard errors of the mean, computed as the sample standard deviation divided by $\sqrt{50}$. Compression ratios divide the original tensor size by the compressed representation size. Baseline sizes include the encoded payload and decoding metadata; LCT uses the storage charged to the compressed tensor, including its overflow allocation, rather than the full capacity of the shared arena.

\paragraph{Additional GPU evaluations.}
Table~\ref{tab:codec-gpu-comparison} reports the codec comparison using a 1\,GiB Gaussian tensor on NVIDIA V100 and A40 and a 4\,GiB Gaussian tensor on NVIDIA H100. LCT has the lowest encoding time among the reported methods on all three GPUs, while SplitZip has the lowest decoding time. LCT's storage geometry was changed on the V100 and H100, using 32,768 elements per block rather than 65,536, with stream configurations changed accordingly to match the compression ratio. This allows for better parallelism on the more powerful GPUs. Our V100 setup does not support nvCOMP methods.

\begin{table}[t]
\centering
\caption{Encoding and decoding Gaussian tensors of $1\,\mathrm{GiB}$ on NVIDIA V100 and A40, and $4\,\mathrm{GiB}$ on NVIDIA H100. Times are mean $\pm$ standard error of the mean; lower is better. Dashes indicate unreported results. Bold entries mark the lowest mean for each operation on each GPU.}
\label{tab:codec-gpu-comparison}
\scriptsize
\setlength{\tabcolsep}{2pt}
\renewcommand{\arraystretch}{1.15}
\begin{tabular}{@{}lrr@{\hspace{8pt}}rr@{\hspace{8pt}}rr@{}}
\hline
 & \multicolumn{2}{c}{NVIDIA V100 (1 GiB)} & \multicolumn{2}{c}{NVIDIA A40 (1 GiB)} & \multicolumn{2}{c}{NVIDIA H100 (4 GiB)} \\
\cline{2-3}\cline{4-5}\cline{6-7}
Method & Encode / ms & Decode / ms & Encode / ms & Decode / ms & Encode / ms & Decode / ms \\
\hline
nvCOMP LZ4 & --- & --- & $629.2\pm0.242$ & $13.64\pm0.138$ & $500.1\pm12.33$ & $16.63\pm11.79$ \\
nvCOMP Bitcomp & --- & --- & $46.43\pm0.134$ & $5.354\pm0.002$ & $25.74\pm4.997$ & $4.771\pm4.771$ \\
nvCOMP ANS & --- & --- & $61.75\pm0.008$ & $5.341\pm0.002$ & $34.6\pm1.408$ & $6.349\pm0.081$ \\
DFloat11 & $5798\pm196.0$ & $6.271\pm0.103$ & $28430\pm2.245$ & $4.750\pm0.007$ & $17550\pm208.1$ & $11.18\pm0.022$ \\
ZipNN & $2504\pm179.6$ & $470.2\pm42.78$ & $1250\pm1.592$ & $529.5\pm1.053$ & $7001\pm329.5$ & $1290\pm30.6$ \\
SplitZip & $5.873\pm0.586$ & $\mathbf{2.461\pm0.072}$ & $7.758\pm0.001$ & $\mathbf{3.580\pm0.002}$ & $9.039\pm0.008$ & $\mathbf{2.652\pm0.009}$ \\
LCT & $\mathbf{3.317\pm0.073}$ & $3.479\pm0.078$ & $\mathbf{3.593\pm0.001}$ & $3.922\pm0.002$ & $\mathbf{3.748\pm0.006}$ & $3.904\pm0.007$ \\
\hline
\end{tabular}
\end{table}

\paragraph{Scaling with input size.}
Figure~\ref{fig:codec-throughput} compares encoding and decoding throughput for inputs from 16 to 2048\,MiB. LCT's encoding throughput exceeds SplitZip's from 64\,MiB onward, reaching approximately 396\,GiB/s at 2048\,MiB. Their decoding throughputs are similar for larger inputs, both reaching approximately 385\,GiB/s at 2048\,MiB.

\begin{figure}[t]
 \centering
 \includegraphics[width=0.8\linewidth]{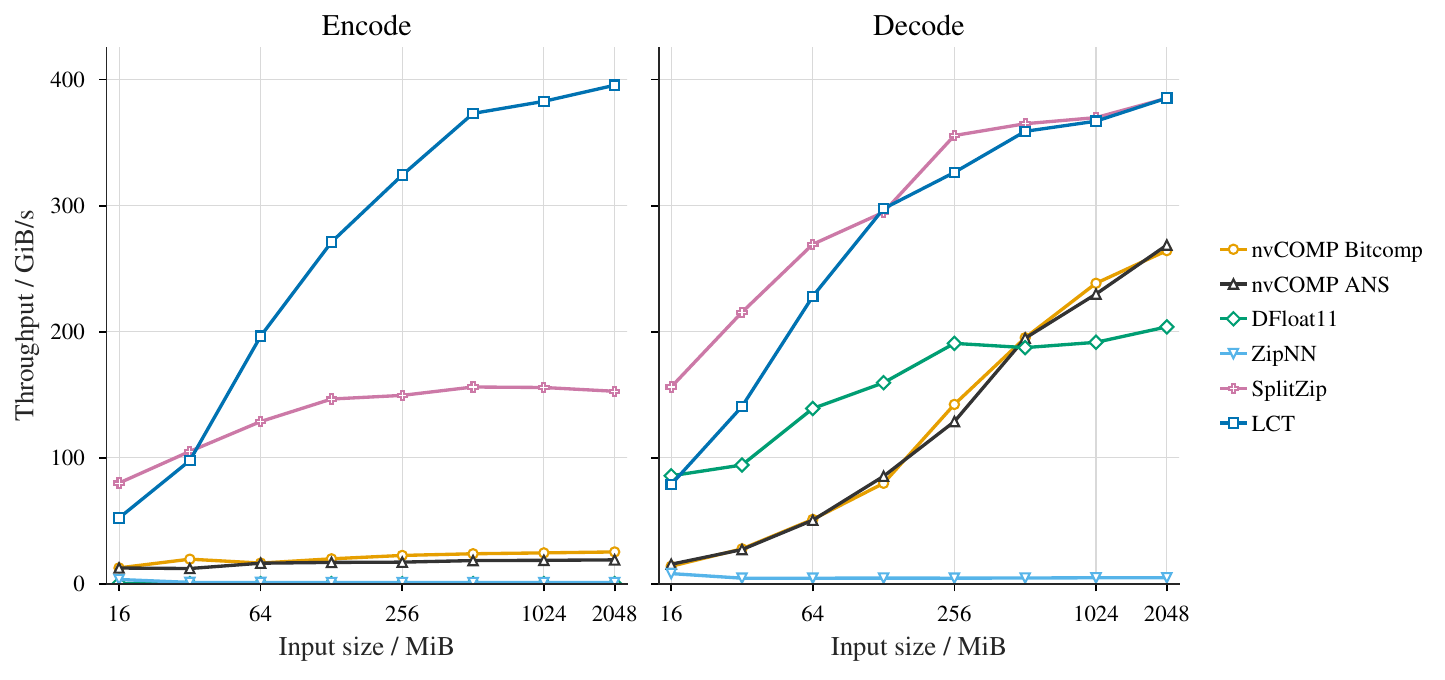}
 \caption{Encoding (left) and decoding (right) throughput versus uncompressed input size. Throughput is the uncompressed size in GiB divided by the mean time in seconds. LZ4 and Cascaded are omitted.}
 \label{fig:codec-throughput}
\end{figure}

\subsection{Fused pointwise operator evaluation}
\label{app:fused-evaluation}

Each pointwise operation was benchmarked. Each input contains $2^{29}$ BF16 elements ($1\,\mathrm{GiB}$). Addition operands are sampled on the GPU from $\mathcal{N}(0,2^2)$. For multiplication, the second operand instead follows $\mathcal{N}(1,0.01^2)$ with seed 2, to keep the multiplier concentrated near one. Input and output compression use the Gaussian codebook, \texttt{CLEAN} storage budget, and no added zero inflation or bias. The compressed paths use preallocated OverflowBuffers.

Before timing, each variant is executed once for compilation and bitwise comparison against a dense reference. Scaled-add references are computed in FP32 and rounded to BF16. Each variant then receives three warm-up calls and 50 timed calls. We use \texttt{time.perf\_counter}, with CUDA synchronization immediately before and after each call. The order of the three variants is rotated and reversed across iterations to reduce execution-order bias. Input generation, initial compression, buffer construction, compilation, validation, and output release are outside the timed region.

\subsection{Training Modded NanoGPT} \label{app:modded_nanogpt}
The base model architecture, data, hyperparameters, and training code were taken from the baseline of Phase 3 of the Modded NanoGPT project. The code was then modified to reduce memory consumption without changing behavior. BF16 precision was used for all activations, momentum and weights. Activation checkpointing was applied to the RMSNorm layers, and the final output head was computed in chunks. Optimizer fusion was also used \citep{optimizer_fusion}. These do not affect the training process or model outputs, but significantly reduce the memory consumption. This is done to ensure a well-optimized baseline for a fair comparison. The Muon optimizer with 12 Newton--Schulz iterations is used, along with a batch size of 256 accumulated over four microbatches of 64 sequences, each of length 1024 tokens. The timing comparison measures a complete optimizer step, including all four microbatches.

LCT is applied to all matrix weights, activations, and momentum buffers. 1-D biases are excluded. Custom functions which compress cached activations are used for the feed-forward layers, attention block, and embedding block. Each type of compressed cached activation is given in Table~\ref{tab:nanogpt-activations}. Activations make up the majority of the storage; model and optimizer states take up $767.88$\,MiB.

\begin{table}[t]
\centering
\caption{Cached activations compressed in the Modded NanoGPT experiment. Sizes are the dense BF16 storage before compression for a batch of 64.}
\label{tab:nanogpt-activations}
\small
\setlength{\tabcolsep}{3pt}
\renewcommand{\arraystretch}{1.15}
\begin{tabularx}{\linewidth}{@{}>{\raggedright\arraybackslash}Xrlrrl@{}}
\hline
Activation & Count & Shape & Each / MiB & Total / MiB & Budget \\
\hline
Embedding output before RMSNorm & 1 & $[64,1024,768]$ & 96 & 96 & CLEAN \\
Input to RMSNorm and QKV projections & 12 & $[64,1024,768]$ & 96 & 1152 & CLEAN \\
Q and K before per-head RMSNorm & 24 & $[64,1024,6,128]$ & 96 & 2304 & CLEAN \\
FlashAttention Q/K/V/output caches & 48 & $[64,1024,6,128]$ & 96 & 4608 & CLEAN \\
FFN input before RMSNorm and first linear & 12 & $[64,1024,768]$ & 96 & 1152 & CLEAN \\
FFN ReLU output before squaring & 12 & $[64,1024,3072]$ & 384 & 4608 & SPARSE \\
\hline
Total & 109 & & & 13920 & \\
\hline
\end{tabularx}
\end{table}

In order for \texttt{torch.compile} to work, the code is carefully structured so critical parts are compiled, while graph breaks are used around LCT operations, since compiling LCT does not lead to speedups.

\paragraph{Effect of the shared buffer.}
Table~\ref{tab:nanogpt-buffer} compares NanoGPT training with and without the shared OverflowBuffer. Using the buffer reduces optimizer-step time from 9.353 to 9.136\,s ($2.3\%$), while increasing peak allocated memory by 51.7\,MiB.

\begin{table}[t]
\centering
\caption{Effect of the shared OverflowBuffer on NanoGPT training with compressed activations, weights, and momentum buffers. Time is per optimizer step; peak PyTorch-allocated GPU memory includes the shared buffer when enabled.}
\label{tab:nanogpt-buffer}
\small
\begin{tabular}{@{}lrr@{}}
\hline
Setup & Time / s & Peak memory / MiB \\
\hline
With buffer & 9.136 & 11964.6 \\
Without buffer & 9.353 & 11912.9 \\
\hline
\end{tabular}
\end{table}

\begin{table}[htbp]
\centering
\caption{Weight and momentum compression ratios at NanoGPT training checkpoints. Ratios are uncompressed size divided by compressed size. The dash denotes an unreported value. * The first momentum step uses a compression distribution with a higher zero fraction.}
\label{tab:nanogpt-checkpoint-compression}
\small
\setlength{\tabcolsep}{12pt}
\begin{tabular}{rrr}
\hline
& \multicolumn{2}{c}{Compression ratio ($\times$)} \\
Step & Weights & Momentum \\
\hline
0    & 1.2240 & --- \\
1    & 1.2240 & 1.4382* \\
2    & 1.4382 & 1.2426 \\
5    & 1.4382 & 1.4116 \\
10   & 1.4382 & 1.4043 \\
300  & 1.4382 & 1.4332 \\
900  & 1.4382 & 1.4379 \\
1500 & 1.4382 & 1.4382 \\
2100 & 1.4382 & 1.4382 \\
2700 & 1.4382 & 1.4382 \\
3350 & 1.4382 & 1.4382 \\
\hline
\end{tabular}
\end{table}

\paragraph{Feed-forward weight and momentum distributions during training.}
Figure~\ref{fig:nanogpt-exponent-history} shows how exponent distributions of feed-forward momentum buffers (left) and weights (right) change during NanoGPT training. Both plots cover the two weight matrices in each feed-forward block. Each curve averages normalized histograms with equal weight per tensor, rather than per element. Probabilities use a logarithmic scale, and exponents have the BF16 bias removed. Exact zeros are shown separately, with their mean fractions in the legends; normalization includes all elements. The early momentum distributions lose their initial zero mass, while the weight distributions shift toward larger exponents at later checkpoints.

\begin{figure}[tb]
 \centering
 \begin{minipage}[t]{0.49\linewidth}
  \vspace{0pt}
  \includegraphics[width=\linewidth]{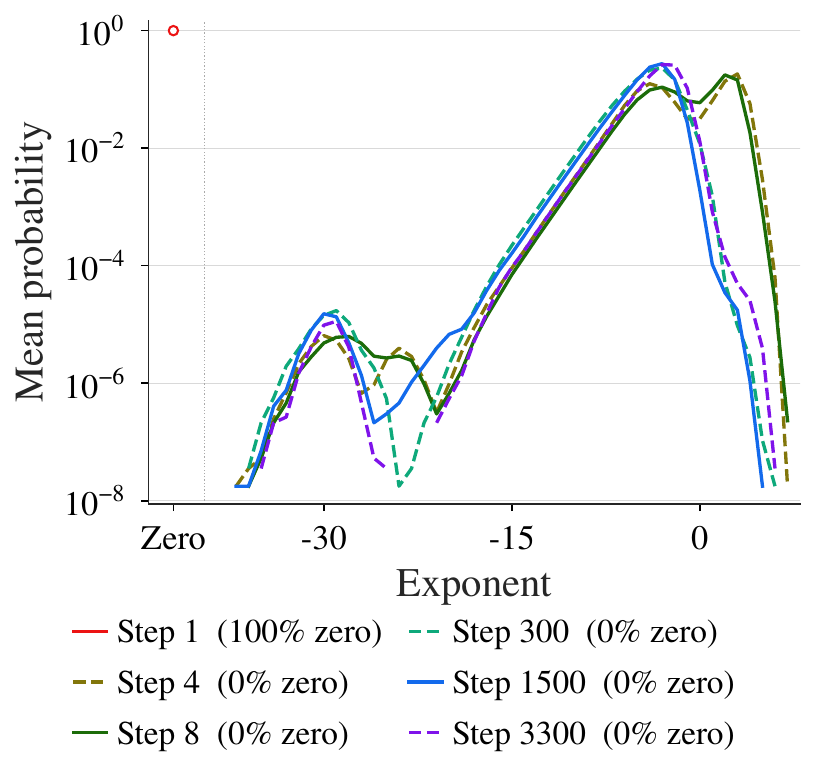}
 \end{minipage}\hfill
 \begin{minipage}[t]{0.49\linewidth}
  \vspace{0pt}
  \includegraphics[width=\linewidth]{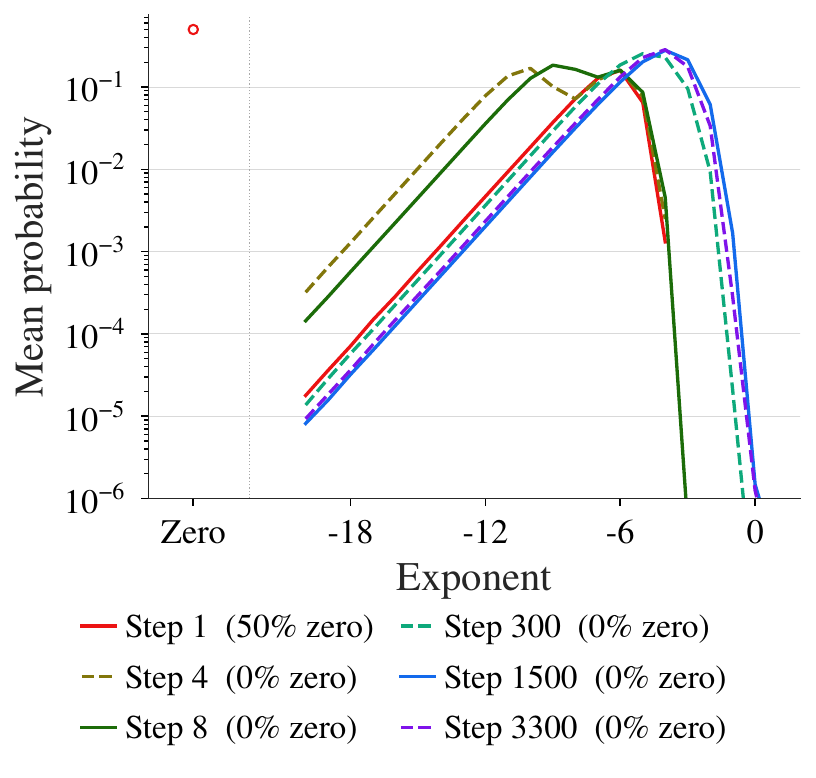}
 \end{minipage}
 \caption{NanoGPT feed-forward exponent distributions during training: Muon momentum (left) and weights (right) at steps 1, 4, 8, 300, 1500, and 3300. Weight exponents below $-20$ are omitted from view.}
 \label{fig:nanogpt-exponent-history}
\end{figure}
\FloatBarrier 

\subsection{Fine-tuning Qwen3-4B} \label{app:qwen_finetune}
The Qwen3-4B-Base \citep{qwen3} model is used. The peak memory and time per step are investigated here. Since our method is lossless, these results are applicable to other fine tuning setups, and can be used on top of methods such as LoRA \citep{lora}.

As in the Modded NanoGPT project, we fuse the RMSNorm layers, chunk the output head, and use optimizer fusion to reduce memory consumption without loss to ensure the baseline is well optimized. The timing test uses FineWeb training inputs with a batch size of 1 and varies the sequence length from 500 to 3500 tokens. Each optimizer step processes one sequence, without gradient accumulation. The Muon optimizer is used for hidden matrices with 5 Newton--Schulz iterations.

\FloatBarrier 
\section{Representation and Implementation Details}
\label{app:implementation}

\subsection{Distribution models and table construction}
\label{app:tables}

The implementation provides a set of default codebook distributions, one for each tensor role. Tables are cached by distribution family and parameters, independently of stream geometry.

Model probabilities come from integrating the magnitude density over exponent intervals: $[2^q,2^{q+1})$ for an interior exponent index $q=e-127$, with the endpoint bins extending to zero and infinity. These probabilities guide code assignment rather than certify a match to the tensor's empirical statistics. At compression time, up to 4096 stratified, jittered samples estimate a tensor-specific center $c$ from the mean of $e_i-127$, rounded and clamped to $[-128,127]$; the center is stored with the tensor and reused during decoding.

The centered symbol mapping reserves table index zero for exponent byte zero. With $d=(e-127-c)\bmod256$ and $z=(-127-c)\bmod256$,
\begin{equation}
 t(e;c)=
 \begin{cases}
 0, & e=0,\\
 d+1, & e\ne0 \text{ and } d<z,\\
 d, & \text{otherwise}.
 \end{cases}
 \label{eq:centered-symbol}
\end{equation}
This is a permutation of the 256 exponent bytes. The reserved symbol covers both exact zeros and subnormals, whose distinction is preserved by the side byte.

To select an escape cutoff, symbols are sorted by model probability and each of the 256 candidates, including no escape, is optimized with Package-Merge under the prefix cap $L_{\max}=\min(10,8)=8$ \citep{bounded_huffman}, then scored by Equation~\ref{eq:escape-search}. Because the literal cost is constant per candidate, minimizing weighted prefix length also minimizes expected cost. Escapes matter under this cap: eight-bit prefixes force all 256 direct codewords to length eight by Kraft's inequality, so grouping rare symbols under one escape leaf frees prefix space for frequent ones. Since the no-escape code is included, the selected model-expected exponent length is at most eight bits. The search is optimal within the class of canonical codes that use a single escape leaf, an eight-bit literal, and the prefix cap $L_{\max}$; we prove this below. Algorithm~\ref{alg:codebook} states the procedure.

\begin{algorithm}[t]
\caption{Codebook generation for a distribution $\gQ$}
\label{alg:codebook}
\begin{algorithmic}[1]
\REQUIRE model $\gQ=(q_t)$ over $n=256$ exponent symbols, prefix cap $L_{\max}=8$, literal width $b=8$
\ENSURE encode table $E$, decode table $D$, escape length
\STATE integrate $\gQ$ over exponent bins and apply zero mass to obtain $\mathbf{p}=(p_t)$
\STATE apply the centered-symbol permutation of Equation~\ref{eq:centered-symbol}, reserving index $0$ for exponent byte zero
\STATE sort so $p_1\geq\cdots\geq p_n$; let $S_k=\sum_{j=n-k+1}^{n}p_j$ be the mass of the $k$ rarest symbols
\STATE $\mathbf{L}\leftarrow\textsc{Package-Merge}(\mathbf{p},L_{\max})$, $c^\star\leftarrow\sum_t p_t L_t$, $k^\star\leftarrow0$ \COMMENT{no-escape candidate}
\STATE $\mathbf{L}^\star\leftarrow\mathbf{L}$
\FOR{$k=1$ to $n-1$}
  \STATE $\mathbf{w}\leftarrow(p_1,\ldots,p_{n-k},\,S_k)$ \COMMENT{replace the tail by one escape leaf}
  \STATE $\mathbf{L}\leftarrow\textsc{Package-Merge}(\mathbf{w},L_{\max})$
  \STATE $c\leftarrow\sum_t w_tL_t+bS_k$ \COMMENT{add literal cost for escaped mass}
  \IF{$c<c^\star$}
    \STATE $c^\star\leftarrow c$, $k^\star\leftarrow k$, $\mathbf{L}^\star\leftarrow\mathbf{L}$
  \ENDIF
\ENDFOR
\STATE assign canonical codes using $\mathbf{L}^\star$, including an escape leaf only if $k^\star>0$
\FOR{each escaped symbol $t$}
  \STATE $\text{code}_t\leftarrow(\text{esc\_code}\ll b)\;|\;\mathrm{index}_b(t)$, \quad $\ell_t\leftarrow L^\star_{\mathrm{esc}}+b$
\ENDFOR
\STATE build $E[256]$ from $(\text{code}_t,\ell_t)$ and $D[2^{L_{\max}}]$ by expanding each prefix
\RETURN $E$, $D$, and $L^\star_{\mathrm{esc}}$ (or $0$ if $k^\star=0$)
\end{algorithmic}
\end{algorithm}

\paragraph{Optimality.}
Sort the symbol probabilities as $p_1\geq\cdots\geq p_n$, with $n=256$. Two reductions pin down the search. First, one escape leaf suffices: given several, reassigning every escaped symbol to the shortest leaf $\ell$ preserves Kraft's inequality, since $2^{-\ell}\leq\sum_i2^{-\ell_i}$, and cannot raise the escape cost $\sum_im_i\ell_i$, since $\ell_i\geq\ell$ for all $i$. Second, the escaped set is a probability suffix: exchanging an escaped symbol $i$ with a directly coded symbol $j$ of probability $p_j\leq p_i$ reduces the cost by $(p_i-p_j)(\ell_{\mathrm{esc}}+8-\ell_j)\geq0$, because $\ell_{\mathrm{esc}}+8\geq9$ while $\ell_j\leq L_{\max}=8$. An optimal code therefore escapes exactly the $k$ least probable symbols for some $k$, and it suffices to test these probability-ranked cutoffs. For fixed $k$ the cost is $\sum_{t\leq n-k}p_t\ell_t+S_k\ell_{\mathrm{esc}}+8S_k$, which equals $\sum_tw_tL_t+8S_k$ for the weight vector $\mathbf{w}=(p_1,\ldots,p_{n-k},S_k)$ that treats the escape leaf as one further symbol of weight $S_k$. Package-Merge minimizes $\sum_tw_tL_t$ over all length assignments satisfying Kraft's inequality and the cap \citep{bounded_huffman}, so the candidate cost $c_k$ is the minimum over all codes whose escape set is $\{n-k+1,\ldots,n\}$. The loop evaluates every $k\in\{0,\ldots,n-1\}$, the omitted case $k=n$ being dominated: the all-eight code is Kraft-feasible and costs $8$, whereas escaping everything costs at least $9$. Keeping the smallest $c_k$ thus returns the optimum, with ties resolved to the smaller escape set by the strict comparison.

Canonical codes are assigned in code-length order. Direct symbols use their assigned code; escaped symbols use the escape prefix followed by their eight-bit table index, which identifies the centered exponent symbol rather than the full BF16 value. Encoding and decoding use precomputed lookup tables (1\,KiB and 4\,KiB) rather than tree traversal, and kernels apply the sampled tensor center when loading them, adapting to scale without repeating the cutoff search.

\begin{table}[t]
\centering
\caption{Implemented stream configurations. The final column is the fixed exponent payload per padded element, before overflow and metadata.}
\label{tab:geometry}
\small
\begin{tabular}{lrrrr}
\hline
Configuration & $B$ & $S$ & Bits/stream & Bits/element \\
\hline
Sparse (ReLU activations) & 65536 & 256 & 640 & 2.500 \\
Clean, non-Gaussian & 65536 & 256 & 800 & 3.125 \\
Clean, Gaussian & 65536 & 256 & 768 & 3.000 \\
Medium & 32768 & 128 & 608 & 4.750 \\
High & 32768 & 128 & 704 & 5.500 \\
\hline
\end{tabular}
\end{table}

The noise-level selector chooses the configuration in Table~\ref{tab:geometry}; it does not inject noise or change precision. If a symbol pair exceeds the stream budget, the overflow tail starts at that pair and all subsequent exponents are retained as raw bytes, with a start marker of 255 for streams without overflow.

\subsection{Overflow allocation and memory accounting}
\label{app:memory}

Kernels count overflowing streams and total tail length, compact their metadata with local prefix sums and atomic reservations, and gather raw exponent tails. Each affected stream records a 32-bit stream identifier, a 32-bit tail offset, and an eight-bit start step; raw tails store $e-127$ modulo 256 in one byte per exponent. Let $N_p$ be the padded element count, $R$ the total stream count, and $W$ the fixed number of 32-bit words per stream. If $K$ streams overflow and $T$ tail exponents are stored, the principal storage terms in bytes are
\begin{equation}
 M_{\mathrm{core}} = N_p + 4RW + T + 9K.
\end{equation}
Allocation additionally includes guard words, descriptors, alignment, cached codebooks, and transient workspaces. The nominal full-block rates follow from Equation~\ref{eq:allocated-ratio}; padding and overflow cause the departure from those rates and can produce expansion for small or poorly matched tensors.

For shared overflow storage, each allocation needs a sufficiently large contiguous free region, so capacity must account for concurrently live states and fragmentation. Per-tensor \texttt{memory\_size()} charges the tensor's aligned region; process-level reporting must instead count the entire shared arena once. The optional \texttt{allow\_raw} check returns dense storage when the fixed payload and padding alone exceed the input size; it does not guarantee against expansion after overflow. Appendix~\ref{app:overflow_buffer} describes allocation and storage lifetimes. Keeping an unneeded dense tensor alongside its compressed representation retains both allocations and reduces the storage benefit.

\subsection{Feed-forward autograd implementation}
\label{app:ffn-implementation}

Listing~\ref{lst:ffn-autograd} shows a custom PyTorch autograd function for a feed-forward layer. The compressed-tensor wrapper stores the input and post-ReLU activation using separate distribution descriptors; the buffer provides an optional shared overflow arena. Enabling compression changes how activations are saved and reconstructed for backward. With compression disabled, the same function retains dense activations. The final two \texttt{None} entries in the backward return correspond to the buffer and compression flag.

\begin{lstlisting}[
    float=p,
    language=Python,
    deletekeywords={buffer,bool},
    basicstyle=\ttfamily\footnotesize,
    columns=fullflexible,
    keepspaces=true,
    showstringspaces=false,
    breaklines=true,
    frame=single,
    caption={Feed-forward autograd function with optional lossless storage of saved activations.},
    label={lst:ffn-autograd},
    captionpos=t
]
class FFN(Function):
    """Dense baseline autograd FFN for comparison.

    For ``x[B, D]``, ``W1[H, D]``, and ``W2[D, H]`` computes
    ``z = relu(x @ W1.T)`` and ``output = z @ W2.T``.
    """

    @staticmethod
    def forward(ctx, x, W1, W2,
                buffer: TensorBuffer | None = None,
                compressed: bool = False):
        """Run the dense FFN forward pass and save tensors
        for backward.
        """
        z = x @ W1.T
        h = z.relu_()
        output = h @ W2.T

        if compressed:
            x = MyCompressed(x, buffer=buffer, dist=act_dist)
            h = MyCompressed(h, buffer=buffer, dist=act_relu_dist)

        ctx.save_for_backward(x, W1, W2, h)
        ctx.compressed = compressed
        return output

    @staticmethod
    def backward(ctx, grad_output):
        """Compute dense FFN gradients from ``grad_output[B, D]``."""
        needs_x = ctx.needs_input_grad[0]

        x, W1, W2, h = ctx.saved_tensors

        if ctx.compressed:
            x = x.decompress_free()
            h = h.decompress_free()

        grad_z = grad_output @ W2
        grad_W2 = grad_output.T @ h

        grad_preact = torch.ops.aten.threshold_backward.grad_input(
            grad_z, h, 0, grad_input=grad_z)

        del h, grad_z
        if not torch.compiler.is_compiling():
            ctx.maybe_clear_saved_tensors()

        if needs_x:
            grad_x = grad_preact @ W1
        else:
            grad_x = None
        grad_W1 = grad_preact.T @ x
        return grad_x, grad_W1, grad_W2, None, None
\end{lstlisting}

\subsection{Comparison with SplitZip} \label{app:split_zip}
SplitZip \citep{splitzip} is a GPU-native framework for lossless KV-cache compression. Like LCT, it compresses BF16 exponents using a codebook prepared before use. SplitZip assigns 4-bit codes to the 16 most common exponent values. Less common exponents are stored verbatim with a 16-bit integer marking their position. Its average storage per value, excluding other metadata and padding, is therefore
\begin{equation}
    b_{\mathrm{SplitZip}} = 8 + 4 + 24\epsilon = 12 + 24\epsilon \quad \text{bits/value},
\end{equation}
where the terms account for the sign--fraction byte, the 4-bit code, and overflow with probability $\epsilon$. With no overflow, the compression ratio is $16/12=4/3\approx1.33$. LCT achieves ratios above $1.4$ by assigning variable-length codes and optimizing which symbols use escapes. SplitZip stores a 16-bit position for each overflow value; LCT instead uses an escape prefix of at most eight bits followed by an eight-bit literal.

LCT also adapts its codebook through runtime exponent centering. SplitZip does not perform this adjustment, so changes in tensor scale can increase overflow under a fixed codebook. Adaptation is particularly useful during training, when tensor magnitudes evolve. In the evaluated SplitZip implementation, encoding synchronizes with the host to determine the overflow-buffer size, adding to encoding latency.

\section{Kernels}
\label{app:kernels}

The Triton kernels implement the codec described in Section~\ref{sec:standard-codec}. This section details their memory layout, bit packing, and allocation optimizations.

\subsection{Stream parallelism and table preparation}

Each lane retains its stream's bit position and partial words in registers. Payload words are stored with the stream index varying fastest, placing neighboring lanes' accesses to the same word position next to each other in memory.

Center estimation takes one deterministically jittered sample per input region. Integer hashing generates the offsets without a random-number buffer and reduces repeated sampling of the same matrix columns. Shifting the lookup tables once per tensor lets the encoder index raw exponent bytes and the decoder recover uncentered exponents directly, removing the centering transformation from the per-symbol loop.

\subsection{Encoding and decoding}
\label{app:bits}

\paragraph{Bit representation and invertibility.}

Let $u_i$ be the unsigned 16-bit pattern of element $i$. Its exponent byte $e_i$ and side byte $r_i$ are
\begin{align}
 e_i &= (u_i \mathbin{\gg} 7) \mathbin{\&} 255, \\
 r_i &= (u_i \mathbin{\&} 127)
       \mathbin{|} ((u_i \mathbin{\gg} 8) \mathbin{\&} 128).
\end{align}
The exponent is coded and $r_i$ is stored verbatim. Reconstruction,
\begin{equation}
 u_i = (e_i \mathbin{\ll} 7)
       \mathbin{|} (r_i \mathbin{\&} 127)
       \mathbin{|} ((r_i \mathbin{\&} 128) \mathbin{\ll} 8),
\end{equation}
recovers the original 16-bit pattern, including signed zeros, subnormals, infinities, and NaN payloads, without floating-point conversion.

\paragraph{Packing and lookup.}
The encoder processes two symbols per iteration. Each table entry stores the code bits and length; since a complete code is at most 16 bits, a pair occupies at most 32 bits and crosses at most one output-word boundary. Partial words remain in registers until written.

The decoder uses a 64-bit window spanning two payload words to decode a pair of symbols. Direct codes return an exponent and consumed length; escapes extract an eight-bit literal from the same window. In the full-block path, the next payload word is loaded only when a lane crosses a word boundary. Decode-table loads retain cache entries, side bytes use global-cache loads, and dense outputs use streaming stores. Loop unrolling and autotuned warp counts, pipeline depths, and register limits control instruction scheduling and register pressure.

\subsection{Overflow compaction and allocation}
\label{app:buffer}

Compaction uses local prefix sums to assign positions within each tile and atomic additions to reserve ranges across tiles. Reservations occur per tile rather than per symbol. Separate gather and scatter kernels access only the raw tails, keeping variable-length overflow handling outside the main packing and decoding loops.

\subsection{Overflow Buffer}
\label{app:overflow_buffer}

The allocator tracks occupied address ranges, uses first-fit search to reserve aligned regions, and writes device descriptors for subsequent kernels. Freeing a region marks it available; coalescing adjacent free regions reduces fragmentation. During an update, old and replacement allocations may coexist. On the same CUDA stream, the old region can be released after its final consumer is launched, preserving execution order.

\subsection{Fused pointwise operations}
\label{app:operations}

Table~\ref{tab:training-operators} summarizes the supported operations. Operator and output-format choices are resolved at compile time, eliminating branches from the inner loop. For a compressed and a dense operand, a correction kernel handles raw overflow tails.

\begin{table}[t]
\centering
\setlength{\tabcolsep}{4pt}
\renewcommand{\arraystretch}{1.15}
\caption{LCT operators and training integration points. $C$ and $D$ denote compressed and dense operands; $\alpha$ is a device scalar. Fused pointwise operators support dense or compressed output. Matrix products use dense GEMM after reconstruction.}
\label{tab:training-operators}
\small
\begin{tabular}{@{}>{\raggedright\arraybackslash}p{0.28\linewidth}>{\raggedright\arraybackslash}p{0.18\linewidth}>{\raggedright\arraybackslash}p{0.49\linewidth}@{}}
\hline
Operator & Computation & Training role \\
\hline
\texttt{compress} & $D\to C$ & Retain BF16 weights or state between uses; store activations within an explicitly managed backward integration. \\
\texttt{decompress} & $C\to D$ & Restore state or saved tensors for a dense consumer. \\
\texttt{compA\_add\_B} & $C+D$ & Accumulate microbatch gradients or apply a prepared parameter increment. \\
\texttt{compA\_mul\_B} & $C\odot D$ & Apply an existing elementwise mask, gate, or scale. \\
\texttt{a\_compA\_add\_B} & $\alpha C+D$ & Update momentum or moment state; combine weight decay with a prepared dense step. \\
\texttt{a\_compA\_add\_compB} & $\alpha C_1+C_2$ & Apply a compressed update to compressed state; $W-\eta U$ uses $U$ first and $W$ second. \\
\texttt{A\_compBT} & $DC^\top$ & Linear-layer forward product $XW^\top$ with compressed weights $W$. \\
\texttt{A\_compB} & $DC$ & Input-gradient product $G_YW$ with upstream gradient $G_Y$. \\
\hline
\end{tabular}
\end{table}

For two compressed operands, preparation maps stream identifiers to overflow offsets so each lane can select coded or raw exponents without searching compact metadata. Both operands must share stream geometry, escape length, and overflow-storage mode. For compressed output, sign--fraction bytes are written directly to their final locations, leaving only exponent bytes for subsequent centering and encoding.

\section{Derivations for Compression Analysis}
\label{app:theory}

\subsection{Cross entropy of the implemented code}

The implemented code lengths differ from ideal model lengths. Let $\ell_\gQ(t)$ include escape prefixes and literal bits, and define
\begin{equation}
 Z=\sum_t2^{-\ell_\gQ(t)}\leq1,\qquad
 \widetilde q_t=\frac{2^{-\ell_\gQ(t)}}{Z},
\end{equation}
where the inequality follows from the prefix-code property. Direct substitution yields the exact identity
\begin{equation}
 \bar\ell=H(\gP)+D_{\mathrm{KL}}(\gP\Vert\widetilde\gQ)-\log_2 Z.
\end{equation}
This separates source entropy, mismatch to the code-induced distribution, and unused code space, and it applies directly to LCT's length-limited escape construction. Every symbol retains a finite representation, including those assigned negligible probability by the analytic model.

\subsection{Exponent scaling and boundary effects}
\label{app:scale-transform}

\paragraph{Proof of Proposition~\ref{prop:exponent-scaling}: shift-corrected exponents.}
Let $X$ take finite, nonzero BF16 values and $a\ne0$ be a finite constant. Define the ideal exponents $E=\lfloor\log_2|X|\rfloor$, $E_a=\lfloor\log_2|aX|\rfloor$, and the common shift $k=\lfloor\log_2|a|\rfloor$. Excluding rounding and finite-range effects, the claim is
\begin{equation}
 (E_a-k)-E\in\{0,1\}.
 \label{eq:scale-shift}
\end{equation}
Write $\log_2|X|=E+U$ and $\log_2|a|=k+\delta$, where $0\leq U,\delta<1$. Then
\begin{equation}
 \lfloor\log_2|aX|\rfloor
 =E+k+\lfloor U+\delta\rfloor
 =E+k+J,\qquad J=\mathbf{1}\{U\geq1-\delta\}.
\end{equation}
The indicator is zero when $\delta=0$, including values exactly on a power-of-two boundary. More generally, let $p_j=\Pr(E=j)$ and $q_j=\Pr(E=j,J=1)$. The exact redistribution is
\begin{equation}
 \Pr(E_a=m)=p_{m-k}-q_{m-k}+q_{m-k-1}.
 \label{eq:exponent-bin-transfer}
\end{equation}
No independence or uniformity of the fractional log magnitude $U$ is assumed: the mass crossing a boundary can depend on the original bin. Given $E$, there are at most two possible values of $E_a$; given $E_a$, there are at most two possible values of $E$. Hence both conditional entropies are at most one bit, and the chain rule gives
\begin{equation}
 H(E_a)-H(E)=H(E_a\mid E)-H(E\mid E_a),
 \qquad |H(E_a)-H(E)|\leq1.
\end{equation}
For $\delta=0$, integer translation is a bijection, proving entropy invariance.

\paragraph{Proof of the BF16 boundary rule.}
For a finite real $y$, let $R(y)$ be the raw exponent byte after a single BF16 rounding to nearest, ties to even, with gradual underflow. The exact rule, including rounding, underflow, and overflow boundaries, is
\begin{equation}
 R(y)=\sum_{r=1}^{255}\mathbf{1}\{|y|\geq\tau_r\},\qquad
 \tau_r=\begin{cases}
 2^{-126}(1-2^{-8}), & r=1,\\
 2^{r-127}(1-2^{-9}), & 2\leq r\leq255.
 \end{cases}
 \label{eq:bf16-exponent-boundaries}
\end{equation}
Scaling replaces each threshold by $\tau_r/|a|$.

BF16 has seven fraction bits, minimum normal magnitude $2^{-126}$, and subnormal spacing $2^{-133}$ \citep{bf16,nvidiaFloatingPoint}. For an interior normal power $2^j$, $-125\leq j\leq127$, the immediately preceding value is $2^j-2^{j-8}$. Their rounding midpoint is therefore $2^j-2^{j-9}$. The midpoint below the minimum normal value is instead $2^{-126}-2^{-134}$ because the subnormal spacing is constant. At each midpoint the power of two has the even significand, so ties round into the higher exponent bin. At the upper end, the maximum finite value is $2^{128}-2^{120}$; rounding overflows at $2^{128}-2^{119}$, including the tie. These are exactly the thresholds in Equation~\ref{eq:bf16-exponent-boundaries}. Counting the thresholds crossed gives the raw exponent byte, proving the rule.

The interval below the first threshold contains both subnormals and zero; magnitudes at most $2^{-134}$ round to zero, including the tie. For a continuous magnitude density $g$ and $a\ne0$, the normal-byte probabilities are consequently
\begin{equation}
 \Pr(R(aX)=r)=\int_{\tau_r/|a|}^{\tau_{r+1}/|a|}g(x)\,dx,
 \qquad 1\leq r\leq254.
 \label{eq:rounded-exponent-probabilities}
\end{equation}
Byte zero collects the interval $[0,\tau_1/|a|)$, and byte 255 collects $[\tau_{255}/|a|,\infty)$. For discrete inputs or point masses, these are half-open probability intervals with the same endpoint convention, rather than density integrals. An exact zero remains zero under finite nonzero scaling. A zero multiplier sends all finite inputs to exponent byte zero; NaNs and infinite inputs are outside the propositions. Flush-to-zero arithmetic or an intermediate rounding step requires its own boundary rule. LCT itself preserves the resulting BF16 bits; the rounding here belongs to the arithmetic producing them.

\paragraph{Proof of Proposition~\ref{prop:boundary-entropy}: boundary entropy correction.}
For the proposition's nonzero input, set $A=R(X)$ and $B=R(aX)$. Let $D$ be the event that $A\ne E+127$ or $B\ne E_a+127$, and write $\varepsilon=\Pr(D)$. This event captures departures from the ideal exponent labels due to rounding and finite range. On $D^c$, the same two-bin argument gives $|H(B\mid D=0)-H(A\mid D=0)|\leq1$. On $D$, each byte has only 256 possible values, so the corresponding difference is at most eight bits. Expanding entropy by the binary event $D$ yields
\begin{equation}
 |H(B)-H(A)|\leq
 \min\{8,\;1+7\varepsilon+h_2(\varepsilon)\},
 \label{eq:boundary-entropy-bound}
\end{equation}
where $h_2$ is binary entropy. Indeed, the weighted conditional differences contribute at most $(1-\varepsilon)+8\varepsilon$, and the remaining term $I(B;D)-I(A;D)$ has absolute value at most $H(D)=h_2(\varepsilon)$. Dropping the trivial eight-bit cap gives the bound in Proposition~\ref{prop:boundary-entropy}; its correction vanishes as $\varepsilon\to0$. If scaling is by a power of two, the conditional difference on $D^c$ is zero and the bound improves to $\min\{8,8\varepsilon+h_2(\varepsilon)\}$.

For actual normal BF16 inputs, power-of-two scaling changes only the exponent when the output stays normal and finite, and is exactly representable. Reusing the same centered symbols also requires the estimated center to shift by that integer. Sampling, the half-integer rounding convention, center clamping, and reserved-zero remapping can affect this correspondence. Thus neither the proposition nor its entropy bound implies an unchanged code length for every implementation configuration.

\subsection{Exponent distributions from continuous densities} \label{app:exponent-distributions}
This section studies the distribution of exponents for data drawn from a particular data distribution.
\paragraph{Continuous change of parameterisation.}
We first derive the formula for the distribution of exponents. Let $X>0$ have density $f_X$ and first use natural logarithms, $e=\ln X$. Since $X=\exp(e)$ and $dx/de=\exp(e)$, change of variables gives
\begin{equation}
 f_e(t)=\exp(t)f_X(\exp(t)).
 \label{eq:log-density-natural}
\end{equation}
For signed tensor values, take the magnitude $M=|X|$. For a continuous signed density, $g(x)=f_X(x)+f_X(-x)$ for $x>0$ is its magnitude density. Using base two, $V=\log_2 M$, gives
\begin{equation}
 f_V(v)=(\ln2)\,2^v g(2^v),\qquad
 \Pr(\lfloor V\rfloor=j)
 =\int_j^{j+1}f_V(v)\,dv
 =\int_{2^j}^{2^{j+1}}g(x)\,dx.
 \label{eq:log-density-binary}
\end{equation}
The Jacobian factor $2^v\ln2$ accounts for the increasing width of value intervals. Equation~\ref{eq:log-density-binary} gives continuous exponent-bin masses while Equation~\ref{eq:rounded-exponent-probabilities} gives the finite-format correction. The entropy of the discrete masses determines the entropy coding cost. Scaling translates the continuous density exactly, $f_{V_a}(v)=f_V(v-\log_2|a|)$.

\paragraph{Why observed exponent distributions are concentrated and asymmetric.}
Next, we discuss the shapes of exponent distributions.
\begin{lemma}[Left-tail decay of log magnitudes]
\label{lem:log-left-tail}
Let $M>0$ have density $g$ with a power-law upper bound near zero: $g(x)=O(x^{\alpha-1})$ as $x\downarrow0$ for some $\alpha>0$. For $V=\log_2M$,
\begin{equation}
 f_V(v)=O(2^{\alpha v})\quad(v\to-\infty),\qquad
 \Pr(\lfloor V\rfloor=j)=O(2^{\alpha j})\quad(j\to-\infty).
 \label{eq:log-left-tail}
\end{equation}
Both the density and the exponent-bin probabilities therefore decay at least exponentially to the left.
\end{lemma}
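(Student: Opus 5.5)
The plan is to work directly from the change-of-variables formula in Equation~\ref{eq:log-density-binary}, $f_V(v)=(\ln2)\,2^v g(2^v)$, and push the assumed bound on $g$ through it. By hypothesis there are constants $C>0$ and $x_0>0$ such that $g(x)\le C x^{\alpha-1}$ for all $0<x<x_0$. I will fix these constants at the start and take $v_0=\log_2 x_0$.

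\emph{Density bound.} For $v<v_0$ we have $2^v<x_0$, so the bound on $g$ applies at $x=2^v$ and gives
\[
f_V(v)\le (\ln 2)\,2^v\,C\,(2^v)^{\alpha-1}=C\ln 2\cdot 2^{\alpha v}.
\]
This is the first claim in Equation~\ref{eq:log-left-tail}. The Jacobian factor $2^v$ is exactly what converts the power $x^{\alpha-1}$ into $x^{\alpha}$, which is why the exponent in the conclusion is $\alpha$ rather than $\alpha-1$.

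\emph{Bin probabilities.} For integers $j$ with $j+1\le v_0$, integrate the density bound over $[j,j+1]$:
\[
\Pr(\lfloor V\rfloor=j)\le C\ln2\int_j^{j+1}2^{\alpha v}\,dv=\frac{C(2^\alpha-1)}{\alpha}\,2^{\alpha j}.
\]
Equivalently, one can integrate $g$ directly over $[2^j,2^{j+1})$ using the second form of Equation~\ref{eq:log-density-binary}, which gives the same bound. Since $\alpha>0$, this bound is $O(2^{\alpha j})$ and decays geometrically as $j\to-\infty$, which is the stated exponential left-tail decay.

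\emph{Where care is needed.} There is no real obstacle here. The only point to handle carefully is the meaning of the $O(\cdot)$ hypothesis: it is an asymptotic statement, so it must be turned into an explicit inequality valid on some interval $(0,x_0)$ before it can be integrated. It is also worth remarking that $\alpha>0$ is what makes $x^{\alpha-1}$ integrable at zero, so the hypothesis is consistent with $g$ being a probability density. In addition, $\alpha>0$ is what makes the geometric factor $2^{\alpha j}$ actually vanish as $j\to-\infty$.
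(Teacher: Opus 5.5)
Your proposal is correct and follows the paper's proof essentially step for step: turn the $O(\cdot)$ hypothesis into $g(x)\le Cx^{\alpha-1}$ on $(0,x_0)$, push it through $f_V(v)=(\ln 2)\,2^v g(2^v)$ to get $C\ln 2\cdot 2^{\alpha v}$, and integrate over a bin to obtain $C(2^\alpha-1)2^{\alpha j}/\alpha$. The paper integrates $g$ over $[2^j,2^{j+1}]$ rather than $f_V$ over $[j,j+1]$, which, as you note, is the same computation.
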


\noindent\emph{Proof.}
There are constants $C,x_0>0$ such that $g(x)\leq Cx^{\alpha-1}$ for $0<x<x_0$. Equation~\ref{eq:log-density-binary} then gives $f_V(v)\leq C\ln2\,2^{\alpha v}$ whenever $2^v<x_0$. For $2^{j+1}<x_0$, integrating the same bound gives
\begin{equation}
 \Pr(\lfloor V\rfloor=j)
 \leq C\int_{2^j}^{2^{j+1}}x^{\alpha-1}\,dx
 =\frac{C(2^\alpha-1)}{\alpha}\,2^{\alpha j}.
\end{equation}
These inequalities prove both bounds.

Any density bounded near zero satisfies the lemma with $\alpha=1$, giving $f_V(v)=O(2^v)$. The lemma also covers densities that diverge at zero when they satisfy the stated bound for some $\alpha>0$. BF16 groups the smallest magnitudes into exponent byte zero as subnormal numbers or zeros.

\begin{lemma}[Right-tail decay of log magnitudes]
\label{lem:log-right-tail}
Let $M>0$ have density $g$ satisfying $g(x)=O(x^{-1-\beta})$ as $x\to\infty$ for some $\beta>0$. For $V=\log_2M$,
\begin{equation}
 f_V(v)=O(2^{-\beta v})\quad(v\to\infty),\qquad
 \Pr(\lfloor V\rfloor=j)=O(2^{-\beta j})\quad(j\to\infty).
 \label{eq:log-right-tail}
\end{equation}
Both the density and the exponent-bin probabilities therefore decay at least exponentially to the right.
\end{lemma}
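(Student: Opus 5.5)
The proof mirrors that of Lemma~\ref{lem:log-left-tail}, with the power-law bound now applied at infinity instead of near zero. From the hypothesis $g(x)=O(x^{-1-\beta})$ as $x\to\infty$, we fix constants $C,x_1>0$ such that $g(x)\leq Cx^{-1-\beta}$ for all $x>x_1$. Both claims in Equation~\ref{eq:log-right-tail} then follow from the change-of-variables formula in Equation~\ref{eq:log-density-binary}.

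For the density bound, we substitute $x=2^v$ into $f_V(v)=(\ln2)\,2^v g(2^v)$. Whenever $2^v>x_1$, this gives
\begin{equation*}
 f_V(v)\leq C\ln2\,2^{v}\,2^{-(1+\beta)v}=C\ln2\,2^{-\beta v}.
\end{equation*}
Hence $f_V(v)=O(2^{-\beta v})$ as $v\to\infty$.

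For the bin probabilities, we take $j$ with $2^j>x_1$ and integrate the same bound over the bin $[2^j,2^{j+1})$ using Equation~\ref{eq:log-density-binary}:
\begin{equation*}
 \Pr(\lfloor V\rfloor=j)\leq C\int_{2^j}^{2^{j+1}}x^{-1-\beta}\,dx
 =\frac{C}{\beta}\left(2^{-\beta j}-2^{-\beta(j+1)}\right)
 =\frac{C(1-2^{-\beta})}{\beta}\,2^{-\beta j}.
\end{equation*}
This is $O(2^{-\beta j})$ as $j\to\infty$, which completes the argument.

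There is no real obstacle here. The only care needed is that $\beta>0$ makes the antiderivative $-x^{-\beta}/\beta$ finite and decreasing, so the integral evaluates to the positive expression above. We also note that the $O(\cdot)$ hypothesis holds only eventually, so both bounds are asserted only for large $v$ and large $j$. This is exactly what the asymptotic statement requires.
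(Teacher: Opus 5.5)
Your proof is correct and matches the paper's argument essentially line for line. You fix constants with $g(x)\leq Cx^{-1-\beta}$ beyond a threshold, substitute into $f_V(v)=(\ln2)\,2^v g(2^v)$ to get the density bound, and integrate the same bound over $[2^j,2^{j+1})$ to obtain $\frac{C(1-2^{-\beta})}{\beta}\,2^{-\beta j}$, exactly as the paper does.
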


\noindent\emph{Proof.}
There are constants $C,x_0>0$ such that $g(x)\leq Cx^{-1-\beta}$ for $x>x_0$. Equation~\ref{eq:log-density-binary} gives $f_V(v)\leq C\ln2\,2^{-\beta v}$ whenever $2^v>x_0$. For $2^j>x_0$,
\begin{equation}
 \Pr(\lfloor V\rfloor=j)
 \leq C\int_{2^j}^{2^{j+1}}x^{-1-\beta}\,dx
 =\frac{C(1-2^{-\beta})}{\beta}\,2^{-\beta j}.
\end{equation}
These inequalities prove both bounds.

A magnitude density satisfying the stated power-law bound therefore has an exponentially decaying right tail in its exponent distribution. In practice, BF16 values that overflow share exponent byte 255.

An example of these effects can be seen in the Gaussian $X\sim\mathcal{N}(0,\sigma^2)$,
\begin{equation}
 \label{eq:gaussian-laplace-log-densities}
 f_V(v)=\frac{\sqrt{2/\pi}\ln2}{\sigma}\,2^v
       \exp\!\left(-\frac{2^{2v}}{2\sigma^2}\right).
\end{equation}
The peak is at $v=\log_2\sigma$ with an exponentially decaying left tail and super-exponentially decaying right tail. Furthermore, changing $\sigma$ shifts the log-density without changing its shape.

\subsection{Calculation of theoretical coding and storage costs}
\label{app:entropy-matrix}

We use the Gaussian, Laplace, power-law, and Gamma models compared in Table~\ref{tab:exponent-cross-entropy}, with population exponent centering and no added zero mass. Exponent probabilities are analytically calculated.

The left panel of Table~\ref{tab:exponent-cross-entropy} gives the ideal cross entropy $H(\gP,\gQ)=-\sum_t p_t\log_2q_t$. The right panel gives predicted allocated exponent storage under the \texttt{CLEAN} configuration. The code lengths $\ell_\gQ(t)$ include escape literals, with mean $\bar\ell=\sum_t p_t\ell_\gQ(t)$. Allocated storage also depends on the fixed stream budget and overflow: raw tails use one byte per exponent, with nine metadata bytes per overflowing stream, as in Equation~\ref{eq:allocated-ratio}. The calculation assumes independent symbols and full blocks, and excludes padding and allocation overhead. Neither panel includes the unchanged sign--fraction byte.

\paragraph{Effect of tail mismatch.}
The source and coding models can have similar central mass but very different tails. The power-law source places substantially more probability on large magnitudes than the Gaussian model, whose tail decays exponentially in the squared magnitude. Consequently, large exponent symbols can have very small model probabilities $q_t$, making their ideal costs $-\log_2q_t$ extremely large. These terms dominate the power-law-to-Gaussian cross entropy, producing the large entry in Table~\ref{tab:exponent-cross-entropy}; it is a mismatch penalty, not high source entropy. LCT limits each exponent code to at most 16 bits, including the escape literal, and stores overflowing tails as raw bytes. Its storage cost is therefore less sensitive to this mismatch. This does not violate the entropy bound: the comparison is with a mismatched ideal coder, not an optimal coder for the source.

\subsection{Entropy of exponent and sign--fraction byte}
\label{app:full-value-entropy}
For typical floating-point data, almost all the compressibility is in the exponent. Let BF16 samples from a distribution be represented as $U=(E,R)$, where $E$ is the exponent byte and $R$ contains the sign and seven fraction bits. Exponent-only coding stores $R$ verbatim and compresses $E$ independently of $R$, at a cost of $H(E)+8$ bits per value, while ideal full-value coding costs $H(E,R)$. The additional saving is
\begin{equation}
 \Delta b=H(E)+8-H(E,R)=8-H(R\mid E).
\end{equation}
For a symmetric continuous distribution rounded to BF16, the sign bit is uniform and independent of the magnitude. When the fraction bits also have high conditional entropy, $H(R\mid E)$ is close to eight bits and the additional saving is small.
For example, for the Gaussian distribution $X\sim\mathcal{N}(0,1)$ rounded to BF16, the conditional entropy is $H(R\mid E)\approx7.9195$ bits. Coding the full value therefore saves only $\Delta b\approx0.0805$ bits per value beyond exponent-only coding.

The sign--fraction byte can have lower entropy in some cases. One example is nonnegative distributions whose sign bit is always zero. This is the case for certain activation functions such as \texttt{ReLU} and \texttt{sigmoid}. Another case is if the mean of the distribution is large and all values cluster around a single exponent, e.g. $\mathcal{N}(12, 1)$ where values are mostly distributed between $2^{3}$ and $2^4$ so there is no pooling of mantissa values (and exponents have very low entropy). LCT does not take advantage of a low-entropy sign--fraction byte if this is the case.

\section{Nemotron Weight and Activation Distributions}
\label{app:measurements}

\subsection{Weight exponent distributions}
\label{app:weight-exponents}

Figure~\ref{fig:weight-exponents} shows BF16 exponent distributions for six Nemotron-H-8B parameter groups, each containing more than 500M values. Histograms include all weights and are pooled across layers. Most probability mass lies near exponents $-7$ and $-8$, supporting exponent-only compression.

\begin{figure}[t]
 \centering
 \includegraphics[width=\linewidth]{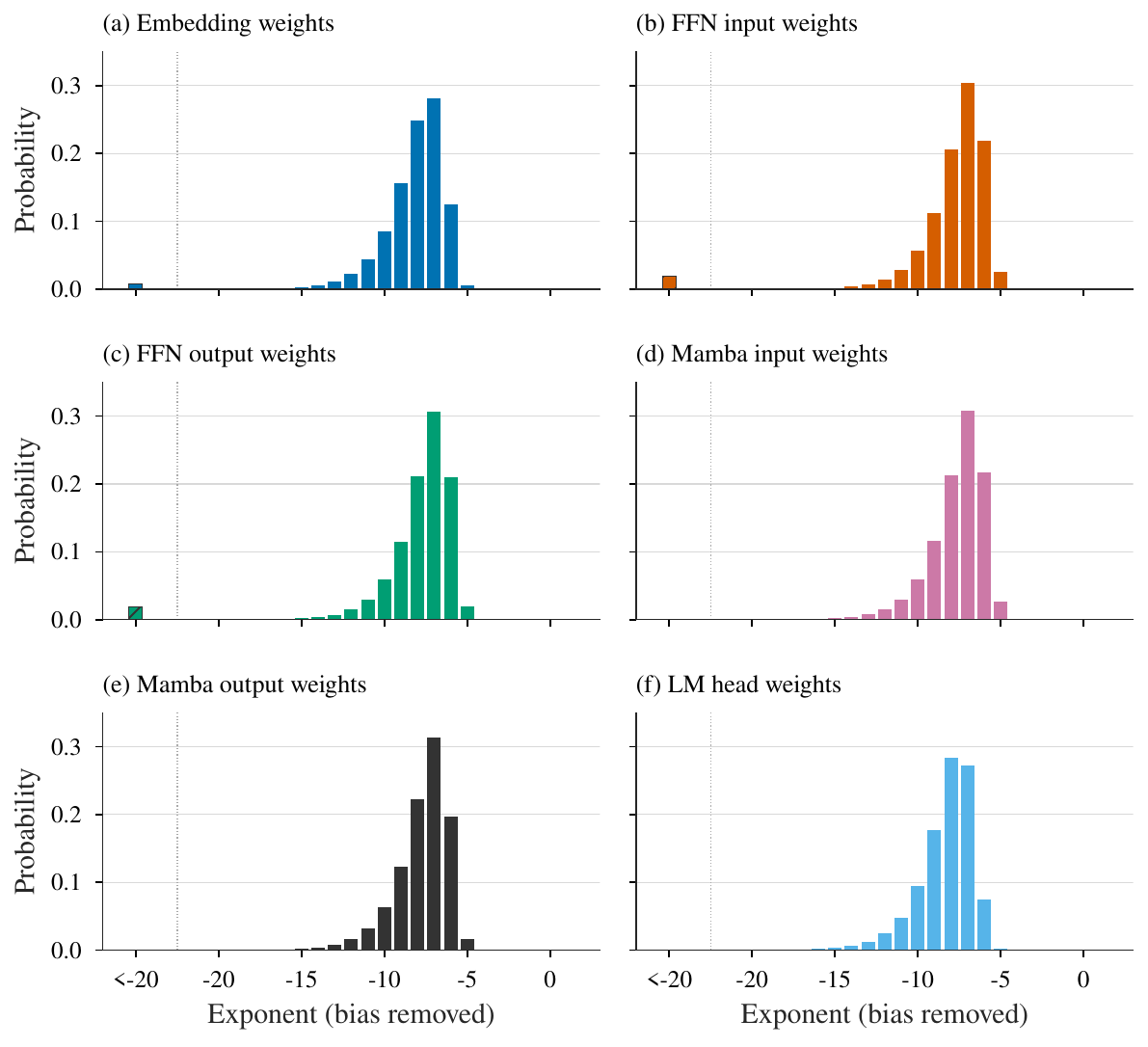}
 \caption{Nemotron-H-8B weight exponent probabilities, with the BF16 bias removed. The hatched bar aggregates exponents below $-20$. Probabilities are normalized over all weights.}
 \label{fig:weight-exponents}
\end{figure}

\subsection{Activation exponent distributions}
\label{app:activation-exponents}

We collect activations in BF16 evaluation mode on four 1024-token sequences from FineWeb validation data, tokenized for Nemotron. Figure~\ref{fig:activation-exponents} pools counts across layers for the five groups with more than 500M observations. Exponents remain concentrated, although their spread varies by activation type. FFN hidden activations are particularly compressible, with 93.23\% exact zeros.

\begin{figure}[t]
 \centering
 \includegraphics[width=\linewidth]{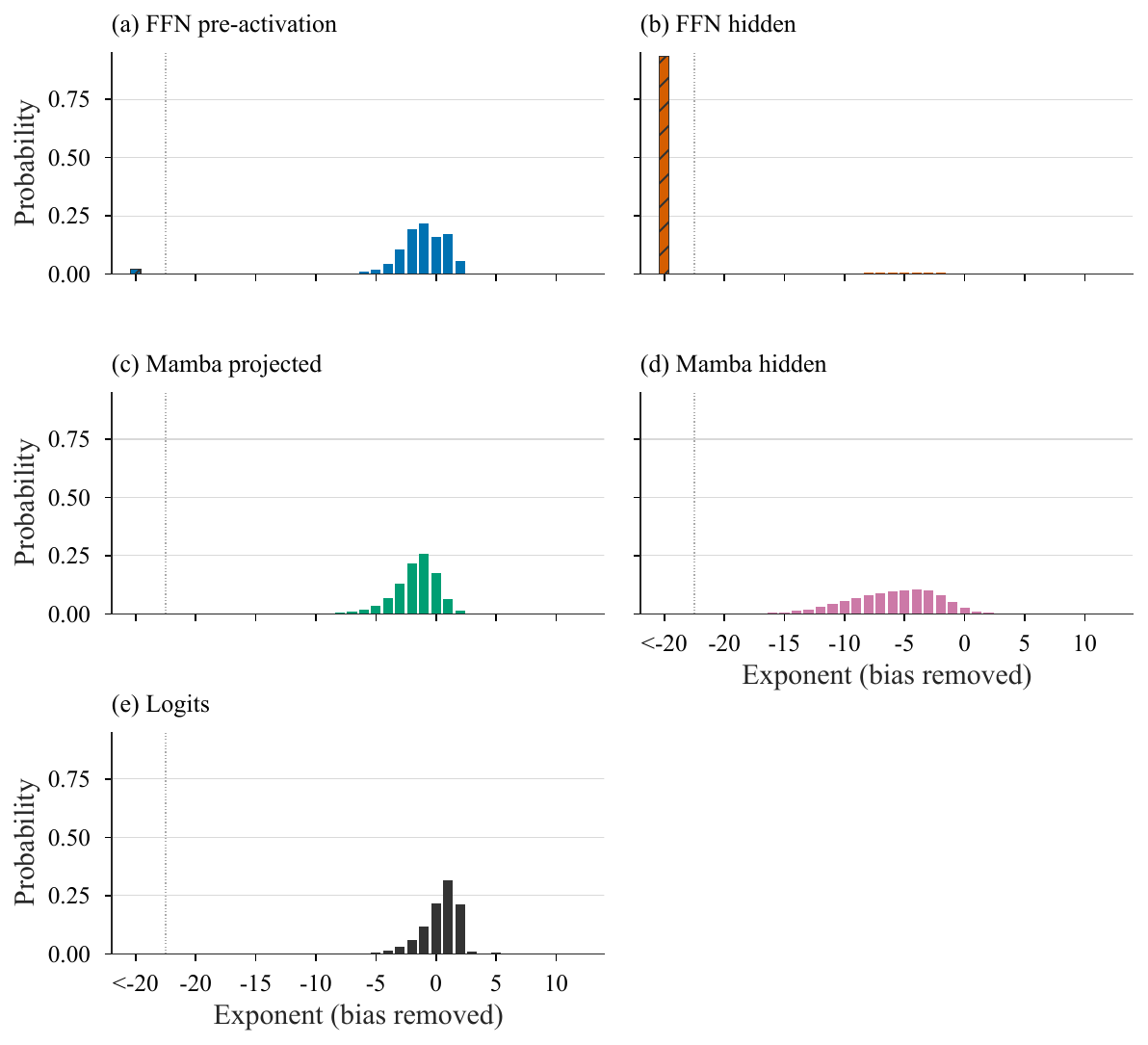}
 \caption{Nemotron-H-8B activation exponent probabilities, with the BF16 bias removed. The hatched bar combines exponents below $-20$, exact zeros, and subnormals; the FFN hidden bar is dominated by exact zeros. Probabilities are normalized over all observations.}
 \label{fig:activation-exponents}
\end{figure}

\end{document}